
\documentclass[10pt,journal,compsoc]{IEEEtran}
%


%

%
\ifCLASSOPTIONcompsoc
  \usepackage[nocompress]{cite}
\else
  \usepackage{cite}
\fi
%

%
\ifCLASSINFOpdf
\else
\fi
%
%

%
\usepackage{amsmath, amsthm, amssymb}

\newtheorem{prop}{Proposition}
\hyphenation{op-tical net-works semi-conduc-tor}

\usepackage{amsfonts, amssymb, amsmath, mathrsfs, bbm}
\usepackage[vlined, ruled]{algorithm2e}
\usepackage{setspace}
\usepackage[noend]{algpseudocode}
\usepackage{graphicx}
\usepackage{ragged2e} 
\usepackage{booktabs,makecell, multirow, multicol, tabularx}
\usepackage[pagebackref,breaklinks,colorlinks]{hyperref}
\usepackage{subfigure}

\usepackage{longtable}
\usepackage{float}

\begin{document}
%
\title{Dataset Distillation: A Comprehensive Review}

\author{Ruonan Yu$^*$,
        Songhua Liu$^*$,
        Xinchao Wang$^\dag$
\IEEEcompsocitemizethanks{\IEEEcompsocthanksitem Authors are  with the National University of Singapore.
\IEEEcompsocthanksitem $*$: Equal Contribution.%
\IEEEcompsocthanksitem $\dag$: Corresponding Author (xinchao@nus.edu.sg).}
}

%
%

\markboth{SUBMISSION TO IEEE TRANSACTIONS ON PATTERN ANALYSIS AND MACHINE INTELLIGENCE}%
{Yu \MakeLowercase{\textit{et al.}}: Dataset Distillation: A Comprehensive Review}
%



\IEEEtitleabstractindextext{%
\begin{abstract}
Recent success of deep learning is largely attributed to 
the sheer amount of data used for training deep neural networks.
Despite the unprecedented success, 
the massive data, unfortunately, significantly 
increases the burden on storage and transmission 
and further gives rise to a cumbersome model training process.
Besides, relying on the raw data for training
\emph{per se} yields concerns about privacy and copyright. 
To alleviate these shortcomings, 
dataset distillation~(DD), also known as dataset condensation (DC), 
was introduced and has recently attracted much research attention
in the community.
Given an original dataset, DD aims to derive a much smaller dataset containing  
synthetic samples, based on which the trained models
yield  performance comparable with those trained on the original dataset. 
In this paper, we give a comprehensive review and summary 
of recent advances in DD and its application. 
We first introduce the task formally and propose an overall algorithmic framework followed by all existing DD methods.
Next, we provide a systematic taxonomy of current methodologies in this area,
and discuss their theoretical interconnections.
We also present current challenges in DD through extensive experiments 
and envision possible directions for future works. 
\end{abstract}

\begin{IEEEkeywords}
Dataset Distillation, Dataset Condensation, Data Compression, Efficient Learning
\end{IEEEkeywords}}

\maketitle

\IEEEdisplaynontitleabstractindextext

%
\IEEEpeerreviewmaketitle

\IEEEraisesectionheading{\section{Introduction}\label{sec:introduction}}

%
%
%
%
\IEEEPARstart{D}{eep} 
learning has gained tremendous success during the past few years in various fields, such as computer vision\cite{krizhevsky2017imagenet}, natural language processing\cite{devlin2018bert}, and speech recognition\cite{amodei2016deep}. 
The groundbreaking deep models, such as 
AlexNet\cite{krizhevsky2017imagenet} in 2012, ResNet\cite{he2016deep} in 2016, Bert\cite{devlin2018bert} in 2018, alongside the more recent ViT\cite{dosovitskiy2020image}, CLIP\cite{radford2021learning}, and DALLE\cite{ramesh2022hierarchical}, 
all rely on large-scale datasets for training.
However, it  takes a great effort to deal with such a sheer amount of data for collecting, storing, transmitting, pre-processing, among others.
Moreover, training over massive datasets typically requires enormous computation costs, and sometimes thousands of GPU hours to achieve satisfactory performance, which, in turn, incommodes many applications that rely on training over datasets multiple times, such as hyper-parameter optimization~\cite{chen2022bidirectional,maclaurin2015gradient,lorraine2020optimizing} and neural architecture search~\cite{such2020generative,li2020random,elsken2019neural}. 
Even worse, information and data are growing explosively in the real world:
on the one hand, only training on new-coming data is prone to the catastrophic forgetting problem~\cite{goodfellow2013empirical,rebuffi2017icarl}, which hurts the performance significantly;
on the other hand, storing all historical data is highly cumbersome, if not infeasible at all. 
In summary, there are contradictions between the demands for high-accuracy models and the limited computation and storage resources. 
In order to solve the aforementioned problem, one natural idea is to compress original datasets into smaller ones and only store useful information for target tasks, which alleviates the burden on storage while preserving the model performance. 

A relatively straightforward method to obtain such smaller datasets is to select the most representative or valuable samples from original datasets so that models trained on these subsets can achieve as good performance as the original ones. 
This line of method is known as core-set or instance selection. 
Although effective, such heuristic selection-based methods discard a large fraction of training samples directly, dismissing their contribution to training results and often resulting in sub-optimal performance. 
Moreover, it inevitably raises concerns about privacy and copyright, if datasets containing raw samples are published and accessed directly without copyright or privacy protection~\cite{dong2022privacy,shokri2015privacy}. 

\begin{figure}[t]
\centering
\includegraphics[width=\linewidth]{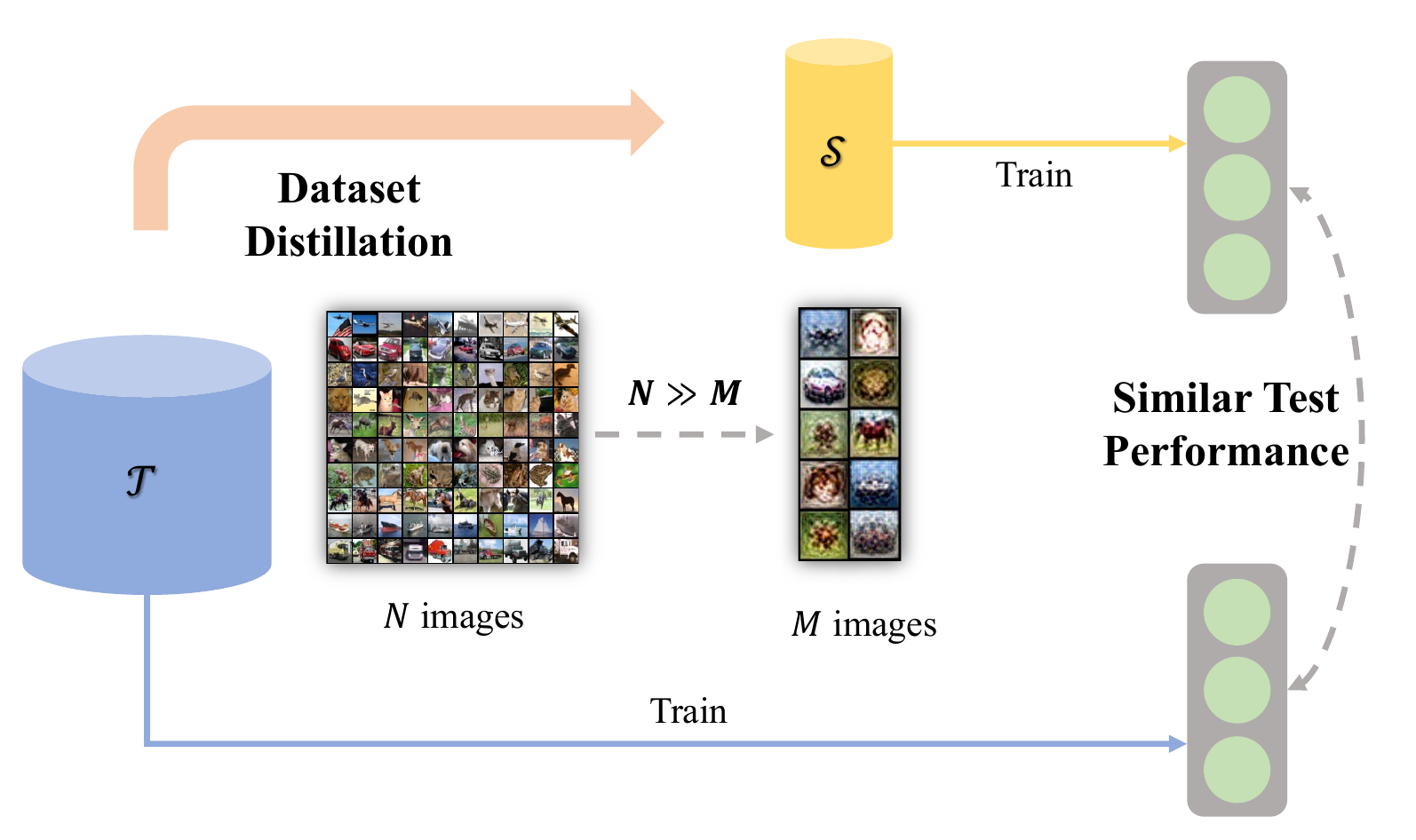} 

\caption{An overview for dataset distillation. Dataset distillation aims to generate a small informative dataset such that the models trained on these samples have similar test performance to those trained on the original dataset.}
\vspace{-0.5cm}
\label{fig:dc}
\end{figure}

To overcome the privacy and copyright issues, 
another line of work that focuses on 
generating new training data for compression
has been proposed, known as
dataset distillation (DD) or dataset condensation (DC).
We show the overall pipeline of DD in Fig.~\ref{fig:dc}.
Unlike the core-set fashion of directly selecting informative
training samples, methods along this line, 
which this paper will review, 
aim at synthesizing original datasets into a limited number of samples
such that they are learned or optimized to represent the knowledge of original datasets. 
It was first introduced and studied by Wang \textit{et al.}~\cite{wang2018dataset}, where an instructive method is proposed to update synthetic samples iteratively,
so that models trained on these samples can perform well on the real ones. 
This seminal work has attracted many follow-up studies in recent years. 
On the one hand, considerable progress has been made in improving the performance of DD by a series of methods~\cite{zhou2022dataset,nguyen2020dataset,nguyen2021dataset,zhao2021dataset,zhao2021datasetb,DBLP:journals/corr/abs-2110-04181},
and the real-world performances of models trained on synthetic datasets 
are indeed approaching
those trained on original ones as much as possible. 
On the other hand, a variety of works have extended the application of DD into a variety of research fields, such as continual learning~\cite{liu2020mnemonics,rosasco2022distilled,sangermano2022sample,wiewel2021condensed,masarczyk2020reducing} and federated learning~\cite{goetz2020federated,zhou2020distilled,xiong2022feddm,song2022federated,liu2022meta,hu2022fedsynth,pi2022dynafed}. 

This paper aims to provide an overview of current research in dataset distillation. 
Specifically, the contributions are listed as follows:
\begin{itemize}
    \item We explore and summarize existing works in dataset distillation and its application comprehensively. 
    \item We provide a systematic classification of current methods in DD. Categorized by the optimization objective, there are three mainstream solutions: performance matching, parameter matching, and distribution matching. Their relationship will also be discussed. 
    \item By abstracting all the key components in DD, we construct an overall algorithmic framework followed by all existing DD methods. 
    \item We discuss current challenges in DD and envision possible future directions for improvements. 
\end{itemize}

The rest of the article is organized as follows. 
We will begin with discussions of closely-related works for DD in Section \ref{sec:begin}, including core-set methods, hyper-parameter optimization, and generative models. 
Then in Section \ref{sec:classification}, we first define the dataset distillation problem formally, along with a general framework of the dataset distillation method. 
Based on this, various methods and their relationship will be systematically introduced. 
Section \ref{sec:application} will introduce applications of DD in various research fields. 
Section \ref{sec:experiments} will evaluate the performance of representation DD methods.
Section \ref{sec:challenges} will discuss the current challenges in the field and propose some possible directions for future works,
and Section \ref{sec:conclusion} will conclude the paper.

\section{Related Works}\label{sec:begin}
Dataset distillation (DD) aims distilling knowledge of the dataset into some synthetic data while preserving the performance of the models trained on it. 
Its motivations, objectives, and solutions are closely related to some existing fields, like knowledge distillation, core-set selection, generative modeling, and hyperparameter optimization. 
Their relationship with dataset distillation will be discussed in this section. 

\subsection{Knowledge Distillation}
Knowledge distillation (KD)~\cite{hinton2015distilling,romero2014fitnets,zagoruyko2016paying,gou2021knowledge} aims to transfer knowledge from a large teacher network to a smaller student network, such that the student network can preserve the performance of the teacher with reduced computational overhead. 
The seminal work by Hinton \textit{et al.}~\cite{hinton2015distilling} leads the student to mimic the outputs of the teacher, which can represent knowledge acquired by the teacher network. 
Afterward, improvements of KD have focused on four aspects: representations of knowledge, teacher-student architectures, distillation algorithms, and distillation schemes. 
First, knowledge can be represented by model response/output~\cite{hinton2015distilling,mirzadeh2020improved}, features~\cite{romero2014fitnets,xu2020feature,wang2020exclusivity}, and relation~\cite{you2017learning,park2019relational,liu2019knowledge}. 
Second, teacher-student architectures refer to the network architectures of teacher and student models, which determines the quality of knowledge acquisition and distillation from teacher to student~\cite{gou2021knowledge}. 
Third, distillation algorithms determine the ways of knowledge transfer. 
A simple and typical way is to match the knowledge captured by the teacher and student models directly~\cite{hinton2015distilling,romero2014fitnets}. 
Beyond that, many different algorithms are proposed to handle more complex settings, such as adversarial distillation~\cite{wang2018kdgan}, attention-based distillation~\cite{zagoruyko2016paying}, and data-free distillation~\cite{lopes2017data,fang2022up}. 
Finally, distillation schemes control training configurations of teacher and student, and there are offline-~\cite{hinton2015distilling,romero2014fitnets}, online-~\cite{zhang2018shufflenet}, and self-distillation~\cite{zhang2019your}. 
As for application, KD is widely used in ensemble learning~\cite{radosavovic2018data} and model compression~\cite{ba2014deep,romero2014fitnets,howard2017mobilenets}.  

The concept of DD is inspired by KD~\cite{wang2018dataset}. 
Specifically, DD aims at a lightweight dataset, while KD aims at a lightweight model. 
In this view, DD and KD are only conceptually related but technically orthogonal. 
It is worth noting that, similar to DD, recent data-free KD methods~\cite{lopes2017data,nayak2019zero,fang2022up} are also concerned with the generation of synthetic training samples since original training datasets are unavailable. 
Their differences are two-fold. 
On the one hand, data-free KD takes a teacher model as input, while DD takes an original dataset as input. 
On the other hand, data-free KD aims at a student model with performance similar to the teacher. 
In order for a satisfactory student, it generally relies on a large number of synthetic samples. 
For DD, it aims at a smaller dataset, whose size is pre-defined and typically small due to limited storage budgets. 

\subsection{Core-set or Instance Selection}
core-set or instance selection is a classic selection-based method to reduce the size of the training set. 
It only preserves a subset of the original training dataset containing only valuable or representative samples, such that the models trained on it can achieve similar performance to those trained on the original. 
Since simply searching for the subset with the best performance is NP-Hard, current core-set methods select samples mainly based on some heuristic strategies. 
Some early methods generally expect a consistent data distribution between core-sets and the original ones~\cite{welling2009herding,chen2012super,feldman2011scalable,bachem2015coresets}. 
For example, Herding~\cite{welling2009herding,chen2012super}, one of the classic core-set methods, aims to minimize the distance in feature space between centers of the selected subset and the original dataset by incrementally and greedily adding one sample each time. 
In recent years, more strategies beyond matching data distribution have been proposed. 
For instance, bi-level optimization-based methods are proposed and applied in fields of continual learning~\cite{borsos2020coresets}, semi-supervised learning~\cite{killamsetty2021retrieve}, and active learning~\cite{killamsetty2021glister}. 
For another example, Craig \textit{et al.}~\cite{mirzasoleiman2020coresets} propose to find the core-set with the minimal difference between gradients produced by the core-set and the original core-set with respective to the same neural network. 

Aiming to synthesize condensed samples, many objectives in DD shares similar spirits as core-set techniques. 
As a matter of fact, on the functional level, any valid objective in core-set selection is also applicable in DD, such as those matching distribution, bi-level optimization, and matching gradient-based methods. 
However, core-set and DD have distinct differences. 
On the one hand, core-set is based on selected samples from original datasets, while DD is based on synthesizing new data. 
In other words, the former contains raw samples in real datasets, while the latter does not require synthetic samples to look real, potentially providing better privacy protection. 
On the other hand, due to the NP-hard nature, core-set methods typically rely on heuristics criteria or greedy strategies to achieve a trade-off between efficiency and performance. 
Thus, they are more prone to sub-optimal results compared with DD, where all samples in condensed datasets are learnable. 

\subsection{Generative Models}
Generative models are capable of generating new data by learning the latent distribution of a given dataset and sampling from it. 
The primary target is to generate realistic data, like images, text, and video, which can fool human beings. 
DD is related to this field in two aspects. 
For one thing, one classic optimization objective in DD is to match the distribution of synthetic datasets and original ones, which is conceptually similar to generative modeling, \textit{e.g.}, generative adversarial network (GAN)~\cite{goodfellow2020generative,mirza2014conditional} and variational autoencoders (VAE)~\cite{kingma2013auto}. 
Nevertheless, generative models try to synthesize realistic images, while DD cares about generating informative images that can boost training efficiency and save storage costs. 

For another, generative models can also be effective in DD. 
In DD, synthetic samples are not learned directly in some works. 
Instead, they are produced by some generative models. 
For example, IT-GAN~\cite{zhao2022synthesizing} adopts conditional GANs~\cite{mirza2014conditional} and GANs inversion~\cite{abdal2019image2stylegan,zhu2020domain} for synthetic data generation so that it only needs to update latent codes rather than directly optimizing image pixels; KFS~\cite{lee2022dataset} uses latent codes together with several decoders to output synthetic data. 
This usage of generative models is known as a kind of synthetic dataset parameterization, whose motivation is that latent codes provide a more compact representation of knowledge than raw samples and thus improve storage efficiency. 

\subsection{Hyperparameter Optimization}
The performance of machine learning models is typically sensitive to the choice of hyperparameters~\cite{maclaurin2015gradient,lorraine2020optimizing}. 
Researches in hyperparameter optimization aim to automatically tune hyperparameters to yield optimal performance for machine learning models, eliminating the tedious work of manual adjustment and thus improving efficiency and performance. 
Hyperparameter optimization has a rich history. 
Early works, like \cite{snoek2012practical} and \cite{bergstra2013making}, are gradient-free model-based methods, choosing hyperparameters to optimize the validation loss after fully training the model. 
However, the efficiency of these methods is limited when dealing with tens of hyperparameters~\cite{maclaurin2015gradient}. 
Thus, gradient-based optimization methods~\cite{maclaurin2015gradient,franceschi2017forward,franceschi2018bilevel,lorraine2020optimizing} are proposed and tremendously succeed in scaling up, which open up a new way of optimizing high-dimensional hyperparameters. 
Introducing gradients into hyperparameter optimization problems opens up a new way of optimizing high-dimensional hyperparameters~\cite{lorraine2020optimizing}. 

DD can also be converted into a hyperparameter optimization problem if each sample in synthetic datasets is regarded as a high-dimensional hyperparameter. 
In this spirit, the seminal algorithm proposed by Wang \textit{et al.}~\cite{wang2018dataset} optimizes the loss on real data for models trained by synthetic datasets with gradient decent, shares essentially the same idea with gradient-based hyperparameter optimization. 
However, the focus and final goals of DD and hyperparameter optimization are different. 
The former focus more on getting synthetic datasets to improve the efficiency of the training process, while the latter tries tuning hyperparameter to optimize model performance. 
Given such orthogonal objectives, other streams of DD methods, like parameter matching and distribution matching, can hardly be related to hyperparameter optimization.  

\section{Dataset Distillation Methods}\label{sec:classification}

Dataset distillation (DD), also known as dataset condensation (DC), is first formally proposed by Wang \textit{et al.}~\cite{wang2018dataset}. 
The target is to extract knowledge from a large-scale dataset and build a much smaller synthetic dataset, such that models trained on it have comparable performance to those trained on the original dataset. 
This section first provides a comprehensive definition of dataset distillation and summarizes a general workflow for DD. 
Then, we categorize current DD methods by different optimization objectives: performance matching, parameter matching, and distribution matching. 
Algorithms associated with these objectives will be illustrated in detail, and their potential relationships will also be discussed. 
Some works also focus on synthetic data parameterization and label distillation, which will be introduced in the following parts. 

\subsection{Definition of Dataset Distillation}
The canonical dataset distillation problem involves learning a small set of synthetic data from an original large-scale dataset so that models trained on the synthetic dataset can perform comparably to those trained on the original. 
Given a real dataset consisting of $|\mathcal{T}|$ pairs of training images and corresponding labels, denoted as 
$\mathcal{T} = (X_t, Y_t)$, where $X_t \in \mathbb{R}^{N \times d}$, $N$ is the number of real samples, $d$ is the number of features, $Y_t \in \mathbb{R}^{N\times C}$, and $C$ is the number of output entries, 
the synthetic dataset is denoted as $\mathcal{S}=(X_s, Y_s)$, where $X_s \in \mathbb{R}^{M\times D}$, $M$ is the number of synthetic samples, $Y_s \in \mathbb{R}^{M\times C}$, $M \ll N$, and $D$ is the number of features for each sample. 
For the typical image classification tasks, $D$ is $height \times width \times channels$, and $y$ is a one-hot vector whose dimension $C$ is the number of classes. 
Formally, we formulate the problem as the following:
\begin{equation}\small
    \text{\small $\mathcal{S} = \mathop{\arg\min}\limits_{\mathcal{S}}\mathcal{L}(\mathcal{S}, \mathcal{T})$}, \label{eq:def}
\end{equation}
where $\mathcal{L}$ is some objective for dataset distillation, which will be elaborated in the following contents. 

\subsection{General Workflow of Dataset Distillation}\label{sec:general}
In this section, we illustrate the general workflow for current DD methods. 
Given that the primary goal of DD is to derive a smaller synthetic dataset given a real dataset such that neural networks trained on the synthetic data can have comparable performance with those trained on the real one, the objective function $\mathcal{L}$ in Eq.~\ref{eq:def} would rely on some neural networks. 
Therefore, two key steps in current DD methods are to train neural networks and compute $\mathcal{L}$ via these networks. 
They perform alternately in an iterative loop to optimize synthetic datasets $\mathcal{S}$, which formulates a general workflow for DD as shown in Alg.~\ref{framework}. 

Firstly, $\mathcal{S}$ is initialized before the optimization loop, which may have crucial impacts on the convergence and final performance of condensation methods~\cite{cui2022dc}. 
Typically, the synthetic dataset $\mathcal{S}$ is usually initialized in two ways: randomly initialization \textit{e.g.}, from Gaussian noise, and randomly selected real samples from the original dataset $\mathcal{T}$. 
The latter is adopted in most DD methods currently. 
Moreover, some coreset methods, \textit{e.g.}, K-Center~\cite{sener2017active}, can also be applied to initialize the synthetic dataset~\cite{cui2022dc}. 

In the iterative loop for synthetic dataset optimization, neural networks and synthetic data are updated alternately. 
Usually, to avoid overfitting problems and make the output synthetic dataset more generalizable, the network $\theta$ will be periodically fetched at the beginning of the loop. 
The networks can be randomly initialized or loaded from some cached checkpoints. 
For the former, common methods for initializing neural networks are applicable, \textit{e.g.}, Kaiming~\cite{he2015delving}, Xavier~\cite{glorot2010understanding}, and normal distribution. 
For the latter, cached checkpoints can be training snapshots, \textit{i.e.}, updated networks in previous iterations~\cite{zhao2021dataset,zhao2021datasetb,lee2022contrastive,jiang2022delving,zhou2022dataset}, or pretrained checkpoints for different epochs on the original dataset~\cite{cazenavette2022dataset}. 
The fetched network is updated via $\mathcal{S}$~\cite{zhao2021dataset,zhao2021datasetb} or $\mathcal{T}$~\cite{kim2022dataset} for some steps if needed. 
Then, the synthetic dataset is updated through some dataset distillation objective $\mathcal{L}$ based on the network, which will be introduced in detail in Section~\ref{sec:objectives}. 
Here, the order of network update and synthetic dataset update may be various in some works~\cite{zhao2021dataset,zhao2021datasetb,lee2022contrastive,jiang2022delving,kim2022dataset,zhou2022dataset}. 
For details on updating networks and synthetic datasets, please refer to Fig.~\ref{fig:taxonomy}. 

\begin{algorithm}[t]
\SetAlgoLined
\setstretch{1.35}
\caption{Dataset Distillation Framework}
\KwIn{Original dataset $\mathcal{T}$} 
\KwOut{Synthetic dataset $\mathcal{S}$}
\label{framework}
\BlankLine
Initialize $\mathcal{S}$ \Comment{Random, real, or core-set}

\While{not converge}{
Get a network $\theta$ \Comment{Random or from some cache}

Update $\theta$ and cache it if necessary \\\Comment{Via $\mathcal{S}$ or $\mathcal{T}$, for some steps}

Update $\mathcal{S}$ via $\mathcal{L}(\mathcal{S},\mathcal{T})$\\\Comment{PerM, ParM, DisM, or their variants}
}
\Return{$\mathcal{S}$}
\end{algorithm}

\subsection{Optimization Objectives in DD}\label{sec:objectives}
To obtain synthetic datasets that are valid to replace original ones in downstream training, different works in DD propose different optimization objectives, denoted as $\mathcal{L}$ in Eq.~\ref{eq:def}. 
This part will introduce three mainstream solutions: performance matching, parameter matching, and distribution matching. 
We will also reveal the relationships between them.

\subsubsection{Performance Matching}

\textbf{Meta Learning Based Methods.}
Performance matching was first proposed in the seminal work by Wang \textit{et al.}~\cite{wang2018dataset}. 
This approach aims to optimize a synthetic dataset such that neural networks trained on it could have the lowest loss on the original dataset, and thus the performance of models trained by synthetic and real datasets is matched:
\begin{equation}\small
\begin{split}
\text{\small $\mathcal{L}(\mathcal{S},\mathcal{T})$}
    &\text{\small $=\mathbb{E}_{\theta^{(0)} \sim \Theta}[l(\mathcal{T}; \theta^{(T)})]$}, \\
    \text{\small $\theta^{(t)}$}
    &\text{\small $=\theta^{(t-1)}-\eta\nabla l(\mathcal{S}; \theta^{(t-1)})$},\\
\end{split}
\label{eq:per}
\end{equation}
where $\Theta$ is the distribution for initialization of network parameters, $l$ is the loss function to train networks, \textit{e.g.}, cross-entropy loss for classification, $T$ denotes the number of inner iterations, and $\eta$ is the learning rate for inner loop. 
The objective of performance matching indicates a bi-level optimization algorithm: in inner loops, weights of a differentiable model with parameter $\theta$ are updated with $\mathcal{S}$ via gradient decent, and the recursive computation graph is cached; in outer loops, models trained after inner loops are validated on $\mathcal{T}$ and the validation loss is backpropagated through the unrolled computation graph to $\mathcal{S}$. 
DD methods based on such an objective are similar to bi-level meta-learning~\cite {DBLP:conf/icml/FinnAL17} and gradient-based hyperparameter optimization techniques~\cite{lorraine2020optimizing}. 
The intuition is shown in Fig.~\ref{fig:perm}. 

The following work by Deng \textit{et al.}~\cite{deng2022remember} finds that equipping the update of parameter $\theta$ for the inner loop with momentum $m$ can improve performance significantly. 
This alternative update function can be written as:
\begin{equation}\small
\begin{split}
    \text{\small $m^{(0)}$}&\text{\small $=0$},\\
    \text{\small $m^{(t)}$}&\text{\small $=\beta m^{(t-1)}+\nabla l(\mathcal{S}; \theta^{(t-1)})$},\\
    \text{\small $\theta^{t}$}&\text{\small $=\theta_{t-1}-\eta m^{(t)}$},
\end{split}
\label{eq:momentum}
\end{equation}
where $\beta$ is a hyperparameter indicating the momentum rate.

\textbf{Kernel Ridge Regression Based Methods.} 
The above meta-learning-based method involves bi-level optimization to compute the gradient of validation loss \textit{w.r.t} synthetic datasets by backpropagating through the entire training graph. 
In this way, outer optimization steps are computationally expensive, and the GPU memory required is proportional to the number of inner loops. 
Thus, the number of inner loops is limited, which results in insufficient inner optimization and bottlenecks the performance~\cite{zhao2021dataset}. 
It is also inconvenient for this routine to be scaled up to large models. 
There is a class of methods tackling this problem~\cite{nguyen2020dataset,nguyen2021dataset,zhou2022dataset,loo2022efficient} based on kernel ridge regression (KRR), which performs convex optimization and results in a closed-form solution for the linear model which avoids extensive inner-loop training. 
Projecting samples into a high-dimensional feature space with a non-linear neural network $f$ parameterized by $\theta$, where $\theta$ is sampled from some distribution $\Theta$, the performance matching metric can be represented as: 
\begin{equation}\small
\begin{split}
    \text{\small $\mathcal{L}(\mathcal{S},\mathcal{T})$} &\text{\small $= \mathbb{E}_{\theta\sim\Theta}[||Y_t - f_\theta(X_t)W_{\mathcal{S},\theta}^*||^2]$},\\
    \text{\small $W_{\mathcal{S},\theta}^*$}&\text{\small $= \mathop{\arg\min}\limits_{W_{\mathcal{S},\theta}}\{||Y_s-f_\theta(X_s)W_{\mathcal{S},\theta}||^2+\lambda||W_{\mathcal{S},\theta}||^2\}$}\\
    &\text{\small $=f_\theta(X_s)^T(f_\theta(X_s)f_\theta(X_s)^T+\lambda I)^{-1}Y_s$}.
\end{split}
\label{eq:linear}
\end{equation}
Here $\lambda$ is a small number for regularization. 
Rewriting Eq.~\ref{eq:linear} in kernel version, we have:
\begin{equation}\small
\begin{split}
    \text{\small $\mathcal{L}(\mathcal{S},\mathcal{T})=\mathbb{E}_{\theta\sim\Theta}[||Y_t - K_{X_tX_s}^\theta(K_{X_sX_s}^\theta+\lambda I)^{-1}Y_s||^2]$},
\end{split}
\label{eq:kernel}
\end{equation}
where $K_{X_1X_2}^\theta=f_{\theta}(X_1){f_{\theta}(X_2)}^T$. 
Nguyen \textit{et al.}~\cite{nguyen2020dataset,nguyen2021dataset} propose to perform KRR with the neural tangent kernel (NTK) instead of training neural networks for multiple steps to learn synthetic datasets, based on the property that KRR for NTK approximates the training of the corresponding wide neural network~\cite{jacot2018neural,lee2019wide,arora2019exact,lee2020finite}. 

To alleviate the high complexity of computing NTK, Loo \textit{et al.}~\cite{loo2022efficient} propose RFAD, based on Empirical Neural Network Gaussian Process~(NNGP) kernel~\cite{neal2012bayesian,lee2017deep} alternatively. 
They also adopt platt scaling~\cite{platt1999probabilistic} by applying cross-entropy loss to labels of real data instead of mean square error, which further improves the performance and is demonstrated to be more suitable for classification tasks. 

\begin{figure}[t]
\centering
\includegraphics[width=\linewidth]{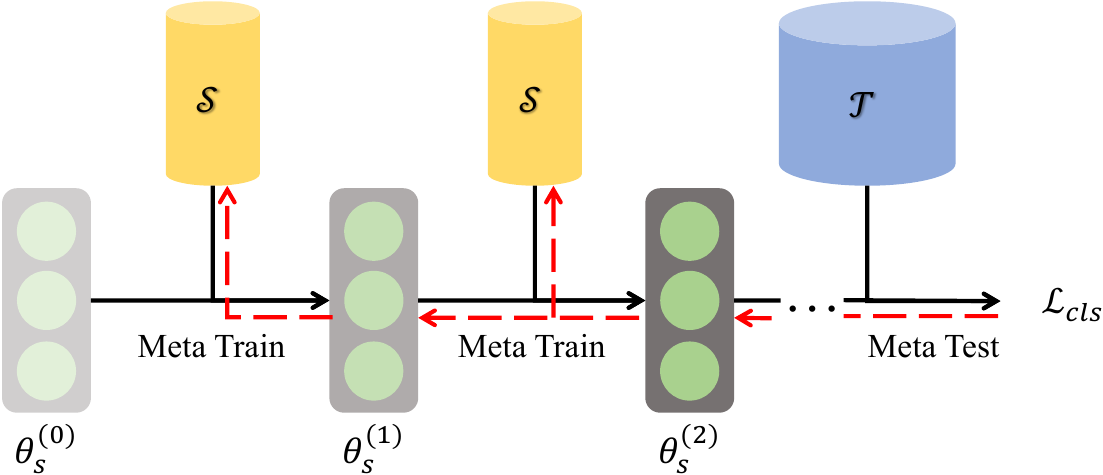} 
\vspace{-0.4cm}
\caption{Meta learning based performance matching. The bi-level learning framework aims to optimize the meta test loss on the real dataset $\mathcal{T}$, for the model meta-trained on the synthetic dataset $\mathcal{S}$.}
\vspace{-0.5cm}
\label{fig:perm}
\end{figure}

Concurrently, FRePo~\cite{zhou2022dataset} decomposes a neural network into a feature extractor and a linear classifier, and considers $\theta$ and $W_{\mathcal{S},\theta}$ in Eq.~\ref{eq:linear} as parameters of the feature extractor and the linear classifier respectively. 
Instead of pursuing a fully-optimized network in an entire inner training loop, it only obtains optimal parameters for the linear classifier via Eq.~\ref{eq:linear}, and the feature extractor is trained on the current synthetic dataset, which results in a decomposed two-phase algorithm. 
Formally, its optimization objective can be written as:
\begin{equation}\small
\begin{split}
    \text{\small $\mathcal{L}(\mathcal{S},\mathcal{T})$}&\text{\small $=\mathbb{E}_{\theta^{(0)}\sim\Theta}[\sum_{t=0}^T||Y_t - K_{X_tX_s}^{\theta^{(t)}}(K_{X_sX_s}^{\theta^{(t)}}+\lambda I)^{-1}Y_s||^2]$},\\
    \text{\small $\theta_{\mathcal{S}}^{(t)}$}&\text{\small $=\theta_{\mathcal{S}}^{(t-1)}-\eta\nabla l(\mathcal{S}; \theta_{\mathcal{S}}^{(t-1)})$}.\\
\end{split}
\label{eq:frepo}
\end{equation}
For FRePo, updates of synthetic data and networks are decomposed, unlike the meta-learning-based method in Eq.~\ref{eq:per}, which requires backpropagation through multiple updates. 

\begin{figure*}[t]
  \centering
  \subfigure[]{
 \includegraphics[width=0.3\linewidth]{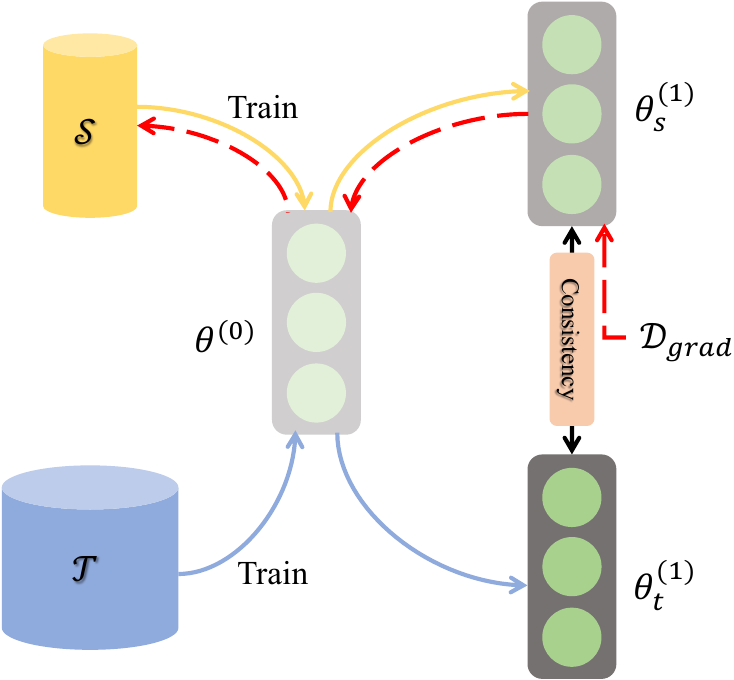}
  }
  \subfigure[]{
  \includegraphics[width=0.62\linewidth]{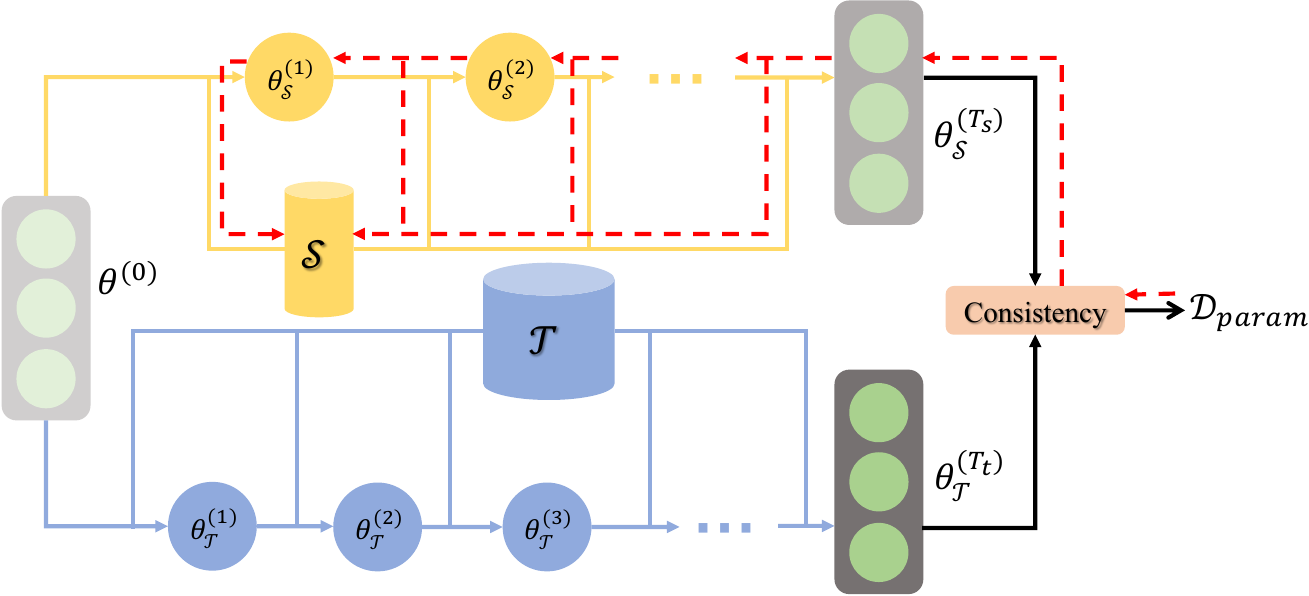} 
  }
  \vspace{-0.2cm}
  \caption{(a) Single-step parameter matching. (b) Multi-step parameter matching. They optimize the consistency of trained model parameters using real data and synthetic data. For single-step parameter matching, it is equivalent to matching gradients. For multi-step parameter matching, it is also known as matching training trajectories.}
  \vspace{-0.5cm}
  \label{fig:param}
\end{figure*}

\subsubsection{Parameter Matching}
The approach of matching parameters of neural networks in DD is first proposed by Zhao \textit{et al.}~\cite{zhao2021dataset}; and is extended by a series of following works~\cite{lee2022contrastive,jiang2022delving,loo2022efficient,cazenavette2022dataset}. 
Unlike performance matching, which optimizes the performance of networks trained on synthetic datasets, the key idea of parameter matching is to train the same network using synthetic datasets and original datasets for some steps, respectively, and encourage the consistency of their trained neural parameters. 
According to the number of training steps using $\mathcal{S}$ and $\mathcal{T}$, parameter matching methods can be further divided into two streams: single-step parameter matching and multi-step parameter matching. 

\textbf{Single-Step Parameter Matching.} 
In single-step parameter matching, as shown in Fig.~\ref{fig:param}(a), a network is updated using $\mathcal{S}$ and $\mathcal{T}$ for only $1$ step, respectively, and their resultant gradients with respective to $\theta$ are encouraged to be consistent, which is also known as gradient matching. 
It is first proposed by Zhao \textit{et al.}~\cite{zhao2021dataset} and is extended by a series of following works~\cite{lee2022contrastive,jiang2022delving,kim2022dataset,zhang2022accelerating}. 
After each step of updating synthetic data, the network used for computing gradients is trained on $\mathcal{S}$ for $T$ steps. 
In this case, the objective function can be formalized as follows:
\begin{equation}\small
\begin{split}
    \text{\small $\mathcal{L}(\mathcal{S},\mathcal{T})$}&\text{\small $=\mathbb{E}_{\theta^{(0)} \sim \Theta}[\sum_{t=0}^T\mathcal{D}(\mathcal{S},\mathcal{T};
    \theta^{(t)})]$},\\
    \text{\small $\theta^{(t)}$} &\text{\small $= \theta^{(t-1)}-\eta\nabla l(\mathcal{S};\theta^{(t-1)})$},
\end{split}
\end{equation}
where metric $\mathcal{D}$ measures the distance between gradients $\nabla l(\mathcal{S};\theta^{(t)})$ and $\nabla l(\mathcal{T};\theta^{(t)})$. 
Since only a single-step gradient is necessary, and updates of synthetic data and networks are decomposed, this approach is memory-efficient compared with meta-learning-based performance matching. 
Particularly, in image classification tasks, when updating the synthetic dataset $\mathcal{S}$, Zhao \textit{et al.}~\cite{zhao2021dataset} sample each synthetic and real batch pair $\mathcal{S}_c$ and $\mathcal{T}_c$ from $\mathcal{S}$ and $\mathcal{T}$ respectively containing samples from the $c$-th class, and each class of synthetic data is updated separately in each iteration: 
\begin{equation}\small
\begin{split}
     \text{\small $\mathcal{D}(\mathcal{S},\mathcal{T};
    \theta)$}&\text{\small $=\sum_{c=0}^{C-1}d(\nabla l(\mathcal{S}_c;\theta),\nabla l(\mathcal{T}_c;\theta))$},\\
     \text{\small $d(\mathbf{A},\mathbf{B})$}&\text{\small $=\sum_{i=1}^{L}\sum_{j=1}^{J_i}(1-\frac{\mathbf{A^{(i)}_j}\cdot\mathbf{B^{(i)}_j}}{\Vert\mathbf{A^{(i)}_j}\Vert\Vert\mathbf{B^{(i)}_j}\Vert})$},
\end{split}
\label{eq:grad}
\end{equation}
where $C$ is the total number of classes, $L$ is the number of layers in neural networks, $i$ is the layer index, $J_i$ is the number of output channels for the $i$-th layer, and $j$ is the channel index. 
As shown in Eq.~\ref{eq:grad}, the original idea proposed by Zhao \textit{et al.}~\cite{zhao2021dataset} adopts the negative cosine similarity to evaluate the distance between two gradients. 
It also preserves the layer-wise and channel-wise structure of the network to get an effective distance measurement. 

Nevertheless, this method has some limitations, \textit{e.g.}, the distance metric between two gradients considers each class independently and ignores relationships underlain for different classes. 
Thus, class-discriminative features are largely neglected. 
To remedy this issue, Lee \textit{et al.}~\cite{lee2022contrastive} propose a new distance metric for gradient matching considering class-discriminative features:
\begin{equation}\small
\begin{split}
    \text{\small $ \mathcal{D}(\mathcal{S},\mathcal{T};
    \theta)= d(\frac{1}{C}\sum_{c=0}^{C-1}\nabla l(\mathcal{S}_c;\theta), \frac{1}{C}\sum_{c=0}^{C-1}\nabla l(\mathcal{T}_c;\theta))$}. \\
\end{split}
     \label{eq:grad_improved}
\end{equation}
Compared with Eq.~\ref{eq:grad}, the position of summation over all classes is different. 

Similar insight is also discussed in Jiang \textit{et al.}~\cite{jiang2022delving}, which combines the two formulation in Eq.~\ref{eq:grad} and Eq.~\ref{eq:grad_improved} with a hyperparameter $\lambda$ for their balance:
\begin{equation}\small
\begin{split}
     \text{\small $\mathcal{D}(\mathcal{S},\mathcal{T};
    \theta)=$}&\text{\small $\sum_{c=0}^{C-1}d(\nabla l(\mathcal{S}_c;\theta),\nabla l(\mathcal{T}_c;\theta))$}\\
     &\text{\small $+\lambda d(\frac{1}{C}\sum_{c=0}^{C-1}\nabla l(\mathcal{S}_c;\theta), \frac{1}{C}\sum_{c=0}^{C-1}\nabla l(\mathcal{T}_c;\theta))$}. 
\end{split}
\end{equation}
Moreover, the work also argues that the cosine distance-based function $d$ in Eq.~\ref{eq:grad} only considers the angle between the gradients. 
Since synthetic datasets $\mathcal{S}$ are small in general, it is prone to over-fitting and bad generalization problems. 
Also, it is found that gradient norms of $\mathcal{S}$ degrade quickly and will stick in a local minimum, \textit{i.e.}, $\nabla l(\mathcal{S};\theta) = 0$ after only a few gradient descent steps. 
In this case, it is not very sensible to match the angle. 
Thus, Jiang \textit{et al.}~\cite{jiang2022delving} further consider the magnitude of the gradient vectors by adding the Euclidean distance between them:
\begin{equation}\small
     \text{\small $d(\mathbf{A}, \mathbf{B}) = \sum_{j=1}^{J}(1-\frac{\mathbf{A_j}\cdot\mathbf{B_j}}{\Vert\mathbf{A_j}\Vert\Vert\mathbf{B_j}\Vert} + \Vert\mathbf{A_j}-\mathbf{B_j}\Vert)$}.
\end{equation}

Beyond formulations of the distance between two groups of gradients, there are also following works focusing on more effective strategies to update networks. 
The original work by Zhao \textit{et al.}~\cite{zhao2021dataset} adopts synthetic dataset $\mathcal{S}$ to update network parameters $\theta$, while $\mathcal{S}$ and $\theta$ are strongly bonded in the optimization process, leading to a chicken-egg problem. 
Also, the size of the synthetic dataset is tiny, the overfitting problem easily occurs in the early stage of the training, and network gradients vanish quickly. 
To solve these problems, Kim \textit{et al.}~\cite{kim2022dataset} proposes to train networks on the real dataset $\mathcal{T}$. 
In this way, the overfitting problem can be alleviated due to the large size of $\mathcal{T}$, and network parameters are independent of $\mathcal{S}$. 

Besides, existing gradient matching methods~\cite{zhao2021dataset,zhao2021datasetb,kim2022dataset,lee2022contrastive,jiang2022delving} are computationally expensive to gain synthetic datasets with satisfactory generalizability, as thousands of differently initialized networks are required to optimize $\mathcal{S}$. 
To accelerate dataset distillation, Zhang \textit{et al.}~\cite{zhang2022accelerating} propose model augmentations, which adopt early-stage models to form a candidate model pool and apply weight perturbation on selected early-stage models from the pool during dataset distillation to increase model diversity. 
This way, synthetic datasets with comparable performance can be obtained with significantly fewer random networks and optimization iterations.

\textbf{Multi-Step Parameter Matching.} 
For single-step parameter matching, since only single-step gradient is matched, errors may be accumulated in evaluation where models are updated by synthetic data for multiple steps. 
To solve this problem, Cazenavette \textit{et al.}~\cite{cazenavette2022dataset} propose a multi-step parameter matching approach known as matching training trajectory (MTT). 
In this method, $\theta$ will be initialized and sampled from checkpoints of training trajectories on original datasets. 
As shown in Fig.~\ref{fig:param}(b), starting from $\theta^{(0)}$, the algorithm trains the model on synthetic datasets for $T_s$ steps and the original dataset for $T_t$ steps, respectively, where $T_s$ and $T_t$ are hyperparameters, and tries to minimize the distance of the endings of these two trajectories, \textit{i.e.}, $\theta_{\mathcal{S}}^{(T_s)}$ and $\theta_{\mathcal{T}}^{(T_t)}$:
\begin{equation}\small
\label{eq3}
\begin{split}
    \text{\small $\mathcal{L}(\mathcal{S},\mathcal{T})$}&\text{\small $=\mathbb{E}_{\theta^{(0)} \sim \Theta}[\mathcal{D}(\theta_{\mathcal{S}}^{(T_s)}, \theta_{\mathcal{T}}^{(T_t)})]$},\\
    \text{\small $\theta_{\mathcal{S}}^{(t)} $}&\text{\small $= \theta_{\mathcal{S}}^{(t-1)}-\eta\nabla l(\mathcal{S}; \theta_{\mathcal{S}}^{(t-1)})$},\\
    \text{\small $\theta_{\mathcal{T}}^{(t)}$} &\text{\small $= \theta_{\mathcal{T}}^{(t-1)}-\eta\nabla l(\mathcal{T}; \theta_{\mathcal{T}}^{(t-1)})$},\\
\end{split}
\end{equation}
where $\Theta$ is the distribution for cached teacher trajectories, $l$ is the loss function for training networks, and $\mathcal{D}$ is defined as follows:
\begin{equation}\small
    \mathcal{D}(\theta_{\mathcal{S}}^{(T_s)}, \theta_{\mathcal{T}}^{(T_t)})=\frac{\Vert\theta_{\mathcal{S}}^{(T_s)}-\theta_{\mathcal{T}}^{(T_t)})\Vert^2}{\Vert\theta_{\mathcal{T}}^{(T_t)}-\theta^{(0)}\Vert^2}. 
\end{equation}
Note that the total loss is normalized by the distance between the starting point $\theta^{(0)}$ and the expert endpoint $\theta_{\mathcal{T}}^{(T_t)}$.
This normalization helps get a strong signal where the expert does not move as much at later training epochs and self-calibrates the magnitude difference across neurons and layers. 
It is demonstrated that this multi-step parameter matching strategy yields better performance than the single-step counterpart. 

As a following work along this routine, Li \textit{et al.}~\cite{li2022datasetb} find that a few parameters are difficult to match in dataset distillation for multi-step parameter matching, which negatively affects the condensation performance. 
To remedy this problem and generate a more robust synthetic dataset, they adopt parameter pruning, removing the difficult-to-match parameters when the similarity of the model parameters trained on synthetic datasets $\theta_{\mathcal{S}}^{(T_s)}$ and on real datasets $\theta_{\mathcal{T}}^{(T_t)}$ is less than a threshold after updating networks in each dataset distillation iteration. 
The synthetic dataset is updated by optimizing the objective function $\mathcal{L}$ calculated with pruned parameters. 

\begin{figure}[t]
\centering
\includegraphics[width=\linewidth]{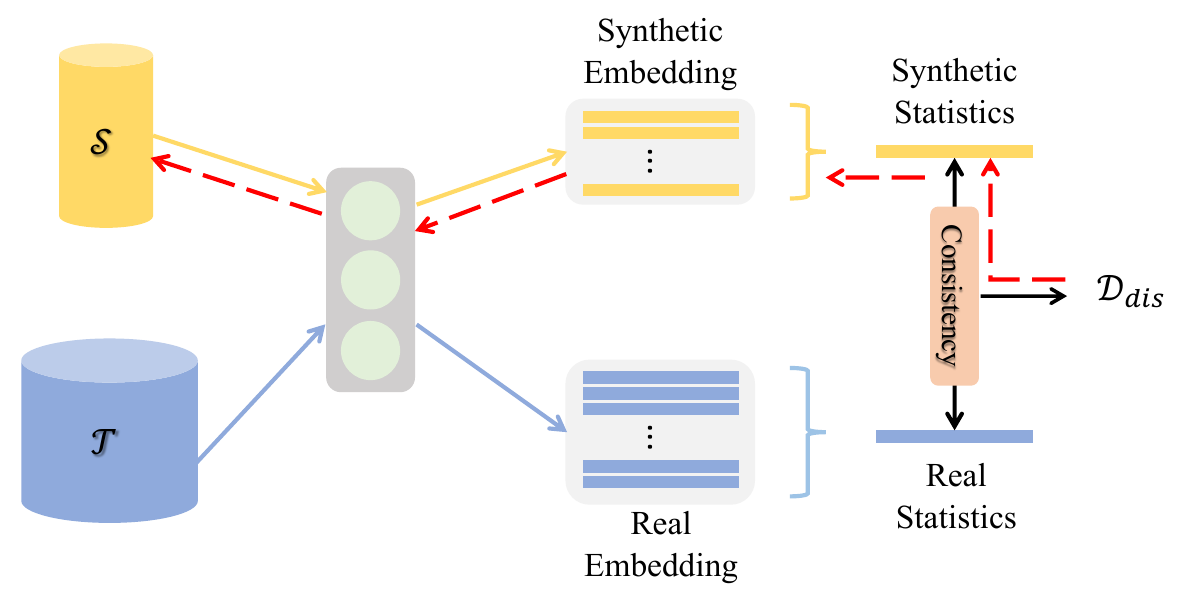} 
\vspace{-0.5cm}
\caption{Distribution matching. It matches statistics of features in some networks for synthetic and real datasets.}
\vspace{-0.5cm}
\label{fig:dism}
\end{figure}

Optimizing the multi-step parameter matching objective defined in Eq.~\ref{eq3} involves backpropagation through the unrolled computational graph for $T_s$ network updates, which triggers the problem of memory efficiency, similar to meta-learning-based methods. 
To solve this problem in MTT, Cui \textit{et al.}~\cite{cui2022scaling} propose TESLA with constant memory complexity \textit{w.r.t.} update steps. 
The key step in TESLA lies in that, in the $i$-th step, $0\leq i<T_s$, it calculates the gradient for the loss function $l(\theta_{\mathcal{S}}^{(i)};\mathcal{S}^{(i)})$ against both current network parameters $\theta_{\mathcal{S}}^{(i)}$ and current synthetic samples $\mathcal{S}^{(i)}$. 
The gradient for each synthetic sample is accumulated. 
Thus, during backpropagation, there is no need to record the computational graph for $T_s$ training steps. 
Also, it is worth noting that adopting soft labels for synthetic data is crucial when condensing datasets with many classes. 

Moreover, Du \textit{et al.}~\cite{du2022minimizing} find that MTT results in accumulated trajectory error for downstream training and propose FTD. 
To be specific, in the objective of MTT shown in Eq.~\ref{eq3}, $\theta^{(0)}$ may be sampled from some training checkpoints of latter epochs on the original dataset, and the trajectory matching error on these checkpoints is minimized. 
However, it is hard to guarantee that desired parameters seen in dataset distillation optimization can be faithfully achieved when trained using the synthetic dataset, which causes the accumulated error, especially for latter epochs. 
To alleviate this problem, FTD adds flat regularization when training with original datasets, which results in flat training trajectories and makes targeting networks more robust to weight perturbation. 
Thus, it yields lower accumulated error compared with the baseline. 

\begin{figure*}[t]
\centering
\includegraphics[width=0.8\linewidth]{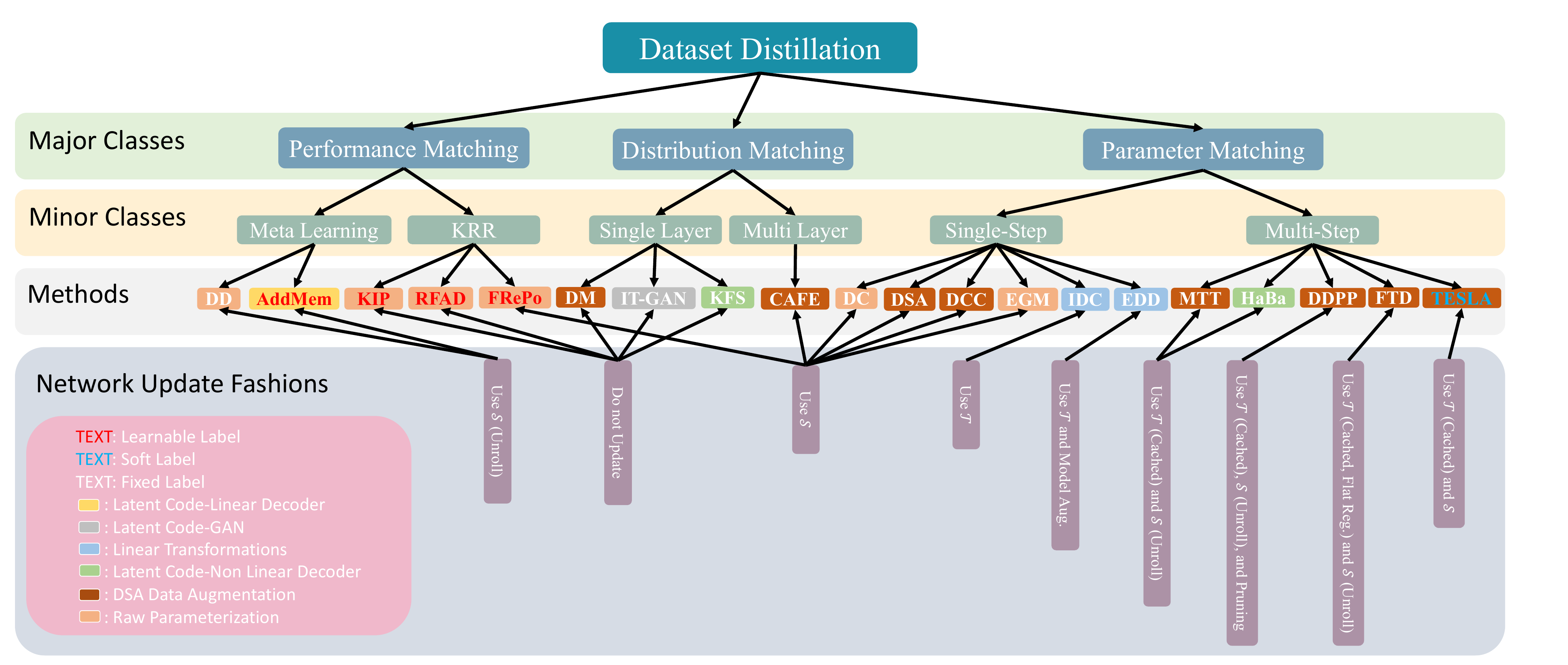}
\vspace{-0.4cm}
\caption{Taxonomy of existing DD methods. A DD method can be analyzed from 4 aspects: optimization objective, fashion of updating networks, synthetic data parameterization, and fashion of learning labels. A tabular version can be found in the supplement.}
\vspace{-0.4cm}
\label{fig:taxonomy}
\end{figure*}

\subsubsection{Distribution Matching}
The distribution matching approach aims to obtain synthetic data whose distribution can approximate that of real data. 
Instead of matching training effects, \textit{e.g.}, the performance of models trained on $\mathcal{S}$, distribution matching directly optimizes the distance between the two distributions using some metrics, \textit{e.g.}, Maximum Mean Discrepancy (MMD) leveraged in Zhao \textit{et al.}~\cite{DBLP:journals/corr/abs-2110-04181}. 
The intuition is shown in Fig.~\ref{fig:dism}. 
Since directly estimating the real data distribution can be expensive and inaccurate as images are high-dimensional data, distribution matching adopts a set of embedding functions, \textit{i.e.}, neural networks, each providing a partial interpretation of the input and their combination providing a comprehensive interpretation, to approximate MMD. 
Here, we denote the parametric function as $f_{\theta}$, and distribution matching is defined as:
\begin{equation}\small
    \text{\small $\mathcal{L}(\mathcal{S},\mathcal{T})=\mathbb{E}_{\theta\in\Theta}[\mathcal{D}(\mathcal{S},\mathcal{T};\theta)]$},\\
\end{equation}
where $\Theta$ is a specific distribution for random neural network initialization, $X_{s,c}$ and $X_{t,c}$ denote samples from the $c$-th class in synthetic and real datasets respectively, and $\mathcal{D}$ is some metric measuring the distance between two distributions. 
DM by Zhao \textit{et al.}~\cite{DBLP:journals/corr/abs-2110-04181} adopts classifier networks without the last linear layer as embedding functions. 
The center, \textit{i.e.}, mean vector, of the output embeddings for each class of synthetic and real datasets are encouraged to be close. 
Formally, $\mathcal{D}$ is defined as:
\begin{equation}\small
\begin{split}
    \text{\small $\mathcal{D}(\mathcal{S},\mathcal{T};\theta)=\sum_{c=0}^{C-1}\Vert\mu_{\theta,s,c}$}&\text{\small $-\mu_{\theta,t,c}\Vert^2$},\\
    \text{\small $\mu_{\theta,s,c}=\frac{1}{M_c}\sum_{j=1}^{M_c}f^{(i)}_{\theta}(X^{(j)}_{s,c}),\quad$}&\text{\small $\mu_{\theta,t,c}=\frac{1}{N_c}\sum_{j=1}^{N_c}f^{(i)}_{\theta}(X^{(j)}_{t,c})$},
\end{split}
\end{equation}
where $M_c$ and $N_c$ are the number of samples for the $c$-th class in synthetic and real datasets respectively, and $j$ is the sample index. 

Instead of matching features before the last linear layer, Wang \textit{et al.}\cite{wang2022cafe} propose CAFE, which forces statistics of features for synthetic and real samples extracted by each network layer except the final one to be consistent. 
The distance function is as follows:
\begin{equation}\small
      \text{\small $\mathcal{D}(\mathcal{S},\mathcal{T};\theta)=\sum_{c=0}^{C-1}\sum_{i=1}^{L-1}\Vert\mu_{\theta,s,c}^{(i)}-\mu_{\theta,t,c}^{(i)}\Vert^2$},
\end{equation}
where $L$ is the number of layers in a neural network, and $i$ is the layer index. 
Also, to explicitly learn discriminative synthetic images, CAFE adopts the discrimination loss $\mathcal{D}r$. 
Here, the center of synthetic samples for each class is used as a classifier to classify real samples. 
The discriminative loss for the $c$-th class tries to maximize the probability of this class for the classifier:
\begin{equation}\small
\begin{split}
      \text{\small $\mathcal{D}r(\mathcal{S},\mathcal{T};\theta)$}&\text{\small $=-\sum_{c=0}^{C-1}\sum_{j=1}^{N_c}\log p(c|X_{t,c}^{(j)},\mathcal{S},\theta)$},\\
      \text{\small $p(c|X_{t,c}^{(j)},\mathcal{S},\theta)$}&\text{\small $=\frac{\exp\{{\mu_{\theta,s,c}^{(L-1)}}^Tf_{\theta}^{(L-1)}(X_{t,c}^{(j)})\}}{\sum_{c'=0}^{C-1}\exp\{{\mu_{\theta,s,c'}^{(L-1)}}^Tf_{\theta}^{(L-1)}(X_{t,c}^{(j)})\}}$}.
\end{split}
\end{equation}
Meanwhile, networks are updated alternately with synthetic data by a dedicated schedule. 
Overall, the objective function can be written as:
\begin{equation}\small
\begin{split}
    \mathcal{L}(\mathcal{S},\mathcal{T})&=\mathbb{E}_{\theta^{(0)}\in\Theta}[\sum_{t=0}^T\{\mathcal{D}(\mathcal{S},\mathcal{T};\theta^{(t)})+\lambda\mathcal{D}r(\mathcal{S},\mathcal{T};\theta^{(t)})\}],\\
    \theta^{(t)} &= \theta^{(t-1)}-\eta\nabla l(\mathcal{S};\theta^{(t-1)}),
\end{split}
\end{equation}
where $\lambda$ is a hyperparameter for balance. 

\subsubsection{Connections between Objectives in DD} 
In this part, we show connections between the above three optimization objectives and give proof that they are essentially related. 
For simplicity, we assume that only the last linear layer of neural networks is considered for updating synthetic data, while previous network layers, \textit{i.e.}, the feature extractor $f_{\theta}$, are fixed. 
The parameter of the linear layer is $W \in \mathbb{R}^{F \times C}$, where $F$ is the dimension for of feature from $f_{\theta}$, and $C$ is the number of classes. 
Least square function is adopted for model optimization, as the case in FRePo~\cite{zhou2022dataset}. 

\textbf{Performance Matching v.s. Optimal Parameter Matching.}
For performance matching, the goal is that optimize the performance on $\mathcal{T}$ for models trained on $\mathcal{S}$, as shown in Eq.~\ref{eq:per}. 
Here, for simplicity, we ignore the regularization term and assume $\lambda = 0$, given that $\lambda$ is generally a small constant for numerical stability. 
Then, the performance matching objective in Eq.~\ref{eq:per} for some given $\theta$ can be written as:
\begin{equation}\small
    \mathcal{L}_{perfM}=\Vert Y_t - f_\theta(X_t)f_\theta(X_s)^T(f_\theta(X_s)f_\theta(X_s)^T)^{-1}Y_s\Vert^2.
\end{equation}
Under this circumstance, we have the following proposition:
\begin{prop}
Performance matching objective for kernel ridge regression models is equivalent to optimal parameter matching objective, or infinity-step parameter matching. 
\end{prop}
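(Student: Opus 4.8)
The plan is to make both objectives precise and then connect them through a single orthogonal (Pythagorean) decomposition. On the performance matching side I would start from $\mathcal{L}_{perfM}=\|Y_t-f_\theta(X_t)W_{\mathcal{S},\theta}^*\|^2$ with $W_{\mathcal{S},\theta}^*=f_\theta(X_s)^T(f_\theta(X_s)f_\theta(X_s)^T)^{-1}Y_s$, the closed-form least-squares head obtained by training the linear layer on $\mathcal{S}$ (the $\lambda=0$ case of Eq.~\ref{eq:linear}). On the parameter matching side I would define the analogous optimal head trained on the real data, $W_{\mathcal{T},\theta}^*=\arg\min_W\|Y_t-f_\theta(X_t)W\|^2$, and note that ``infinity-step'' parameter matching is exactly optimal parameter matching: since the least-squares loss is convex in $W$, running the inner gradient descent for $T\to\infty$ steps converges to $W_{\mathcal{S},\theta}^*$ and $W_{\mathcal{T},\theta}^*$ respectively, so the limiting parameter-matching target is $\|W_{\mathcal{S},\theta}^*-W_{\mathcal{T},\theta}^*\|^2$.

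The core algebraic step is the splitting
$$Y_t-f_\theta(X_t)W_{\mathcal{S},\theta}^*=\bigl(Y_t-f_\theta(X_t)W_{\mathcal{T},\theta}^*\bigr)+f_\theta(X_t)\bigl(W_{\mathcal{T},\theta}^*-W_{\mathcal{S},\theta}^*\bigr).$$
By the normal equations characterizing $W_{\mathcal{T},\theta}^*$, the residual $Y_t-f_\theta(X_t)W_{\mathcal{T},\theta}^*$ is orthogonal to the column space of $f_\theta(X_t)$, hence orthogonal to the second term; taking squared norms yields the Pythagorean identity
$$\mathcal{L}_{perfM}=\|Y_t-f_\theta(X_t)W_{\mathcal{T},\theta}^*\|^2+\bigl\|f_\theta(X_t)(W_{\mathcal{T},\theta}^*-W_{\mathcal{S},\theta}^*)\bigr\|^2.$$
The first term is independent of $\mathcal{S}$, so as a function of the synthetic data the performance matching objective coincides, up to an additive constant, with $\|f_\theta(X_t)(W_{\mathcal{T},\theta}^*-W_{\mathcal{S},\theta}^*)\|^2$, which is exactly the squared distance between the two optimal parameter vectors in the $f_\theta(X_t)^Tf_\theta(X_t)$-weighted (Mahalanobis) norm. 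This exhibits performance matching as a weighted optimal / infinity-step parameter matching objective, giving the claimed equivalence; I would then take the expectation over $\theta\sim\Theta$, integrating the per-$\theta$ identity term by term, to recover the full objectives.

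The main obstacle I anticipate is bookkeeping around the weighting matrix and the $\mathcal{S}$-independent residual: the equivalence is exact as an $\arg\min$ over $\mathcal{S}$ but only up to the fixed positive semidefinite reweighting by $f_\theta(X_t)^Tf_\theta(X_t)$ and an additive constant, so the statement must be read as equivalence of the induced optimization problems rather than literal equality of loss values. A secondary subtlety is the dimensional asymmetry, since $W_{\mathcal{S},\theta}^*$ is the minimum-norm solution of an underdetermined system while $W_{\mathcal{T},\theta}^*$ solves an overdetermined one; this should be handled by using pseudoinverses uniformly so that both closed forms and the orthogonality argument remain valid regardless of the relative sizes of $M$, $N$, and $F$.
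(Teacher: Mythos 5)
Your proof is correct, and it reaches the paper's conclusion by a genuinely different route. The paper's proof writes down the same two closed-form heads $W_{\mathcal{S},\theta}^*$ and $W_{\mathcal{T},\theta}^*$, then applies the fixed left-inverse $\mathcal{M}=(f_\theta(X_t)^Tf_\theta(X_t))^{-1}f_\theta(X_t)^T$ to the performance-matching residual, using $\mathcal{M}f_\theta(X_t)=I$ to turn $Y_t-f_\theta(X_t)W_{\mathcal{S},\theta}^*$ into $W_{\mathcal{T},\theta}^*-W_{\mathcal{S},\theta}^*$, and summarizes the relation as $\Vert\mathcal{M}\Vert^2\mathcal{L}_{perfM}=\mathcal{L}_{paraM}$ with $\mathcal{M}$ a constant matrix independent of $\mathcal{S}$. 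You instead split the residual through the normal equations and invoke the Pythagorean identity, obtaining $\mathcal{L}_{perfM}=\mathrm{const}+\Vert f_\theta(X_t)(W_{\mathcal{T},\theta}^*-W_{\mathcal{S},\theta}^*)\Vert^2$. What your version buys is an exact identity: it makes explicit both the $\mathcal{S}$-independent additive constant (the irreducible residual on $\mathcal{T}$) and the precise metric in which the parameters are matched, namely the $f_\theta(X_t)^Tf_\theta(X_t)$-weighted norm rather than the plain Frobenius norm; it also correctly flags that the equivalence is one of induced optimization problems rather than of loss values, and that the two heads come from an underdetermined versus an overdetermined system. What the paper's version buys is directness: it lands on the unweighted distance $\Vert W_{\mathcal{S},\theta}^*-W_{\mathcal{T},\theta}^*\Vert^2$ in one step, though the displayed scalar relation is really shorthand for composition with the fixed linear map $\mathcal{M}$ (the exact statement is $\mathcal{L}_{paraM}=\Vert\mathcal{M}(Y_t-f_\theta(X_t)W_{\mathcal{S},\theta}^*)\Vert^2$, not a literal scalar multiple), so your Pythagorean account is arguably the tighter justification of the same claim. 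The convexity argument identifying infinity-step with optimal parameter matching is identical in both.
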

\begin{proof}
Given that KRR is a convex optimization problem and the optimal analytical solution can be achieved by a sufficient number of training steps, the optimal parameter matching, in this case, is essentially infinity-step parameter matching, which can be written as:
\begin{equation}\small
   \begin{split}
       \mathcal{L}_{paraM} &= \Vert W_{\mathcal{S},\theta}^* - W_{\mathcal{T},\theta}^*\Vert^2,\\
       W_{\mathcal{S},\theta}^*&= \mathop{\arg\min}\limits_{W_{\mathcal{S},\theta}}\{\Vert Y_s-f_\theta(X_s)W_{\mathcal{S},\theta}\Vert^2\}\\
    &=f_\theta(X_s)^T(f_\theta(X_s)f_\theta(X_s)^T)^{-1}Y_s,\\
    W_{\mathcal{T},\theta}^*&= \mathop{\arg\min}\limits_{W_{\mathcal{T},\theta}}\{\Vert Y_t-f_\theta(X_t)W_{\mathcal{T},\theta}\Vert^2\}\\
    &=(f_\theta(X_t)^Tf_\theta(X_t))^{-1}f_\theta(X_t)^TY_t.\\ 
   \end{split} 
\end{equation}
Here, we denote $(f_\theta(X_t)^Tf_\theta(X_t))^{-1}f_\theta(X_t)^T$ as $\mathcal{M}$. 
Then we get: 
\begin{equation}\small
\label{eq:perpara}
    \Vert\mathcal{M}\Vert^2\mathcal{L}_{perfM} = \mathcal{L}_{paraM}.
\end{equation}
As $\mathcal{M}$ is a constant matrix and not related to the optimization problem \textit{w.r.t} $\mathcal{S}$, concluded from Eq.~\ref{eq:perpara}, performance matching is equivalent to optimal parameter matching under this circumstance. 
\end{proof}

\textbf{Single-Step Parameter Matching v.s. First-Moment Distribution Matching.} 
In this part, we reveal connections between single-step parameter matching, \textit{i.e.}, gradient matching, and distribution matching based on the first moment, \textit{e.g.}, \cite{DBLP:journals/corr/abs-2110-04181}. 
Some insights are adapted from Zhao \textit{et al.}~\cite{DBLP:journals/corr/abs-2110-04181}. 
We have the following proposition:
\begin{prop}
    First-order distribution matching objective is approximately equal to gradient matching of each class for kernel ridge regression models following a random feature extractor. 
\end{prop}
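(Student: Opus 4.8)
The plan is to reduce both objectives to explicit expressions in the class-mean features $\mu_{\theta,s,c}$ and $\mu_{\theta,t,c}$ by computing the per-class gradient of the least-squares loss with respect to the linear classifier $W$ and exploiting the one-hot label structure. Because both the gradient-matching metric of Eq.~\ref{eq:grad} and the distribution-matching metric already carry the same expectation $\mathbb{E}_{\theta\sim\Theta}[\cdot]$ over the random feature extractor, it suffices to establish the identity for a single fixed $\theta$ and then take expectations.

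First I would write the class-$c$ least-squares loss as $l(\mathcal{S}_c;W)=\Vert Y_{s,c}-f_\theta(X_{s,c})W\Vert^2$ and differentiate:
\begin{equation}\small
\nabla_W l(\mathcal{S}_c;W)=-2f_\theta(X_{s,c})^T Y_{s,c}+2f_\theta(X_{s,c})^T f_\theta(X_{s,c})W.
\end{equation}
The first summand is data-dependent and $W$-independent (the first-moment part), while the second is a second-moment term proportional to $W$. The key structural fact is that every sample in the batch $\mathcal{S}_c$ shares the same one-hot label $e_c$, so $Y_{s,c}=\mathbf{1}_{M_c}e_c^T$ and therefore $f_\theta(X_{s,c})^T Y_{s,c}=\big(\sum_j f_\theta(X_{s,c}^{(j)})\big)e_c^T=M_c\,\mu_{\theta,s,c}e_c^T$. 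Dividing by the batch size (i.e.\ using the mean loss) and evaluating at the freshly initialized classifier $W=0$, the class-$c$ gradient collapses to $-2\mu_{\theta,s,c}e_c^T$, and identically to $-2\mu_{\theta,t,c}e_c^T$ for the real batch.

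Next I would assemble the per-class gradient difference $-2(\mu_{\theta,s,c}-\mu_{\theta,t,c})e_c^T$ and note that, since $\Vert e_c\Vert=1$, its squared Frobenius norm is exactly $4\Vert\mu_{\theta,s,c}-\mu_{\theta,t,c}\Vert^2$. Summing over $c$ shows that the Euclidean per-class gradient-matching metric equals $4\sum_c\Vert\mu_{\theta,s,c}-\mu_{\theta,t,c}\Vert^2$, which is four times the first-moment distribution-matching objective $\mathcal{D}(\mathcal{S},\mathcal{T};\theta)$. Taking the expectation over $\theta$ then yields the stated equivalence up to a harmless constant factor.

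I expect the delicate point to be the qualifier \emph{approximately}. The identity is exact only at $W=0$: for a general classifier the second-moment term $2f_\theta(X_{s,c})^T f_\theta(X_{s,c})W$ contributes to the true gradient but is invisible to first-moment distribution matching, so the equivalence holds precisely to the extent that this term is negligible (a small or freshly initialized $W$, or comparable per-class feature second moments). A secondary subtlety is the distance metric: the clean proportionality above uses a Euclidean/Frobenius distance, whereas the original metric in Eq.~\ref{eq:grad} is a negative cosine similarity, under which the same factorization reduces gradient matching to aligning the \emph{directions} of the class means $\mu_{\theta,s,c}$ rather than their full Euclidean discrepancy. I would therefore present the $W=0$ evaluation as the regime in which the approximate equivalence becomes exact, and flag the second-moment term and the metric choice as the two sources of the approximation.
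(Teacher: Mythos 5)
Your proof is correct and follows essentially the same route as the paper's: both differentiate the per-class least-squares loss with respect to the final linear layer, discard the prediction-dependent part of the gradient (the paper by assuming near-uniform predictions $p_{c',j}\approx q$ at random initialization, you by evaluating at $W=0$, which is the exact special case $q=0$), and conclude that what remains is a scalar multiple of the class-mean feature $\mu_{\theta,\cdot,c}$, so per-class gradient matching reduces to first-moment distribution matching. Your explicit identification of the neglected second-moment term $2f_\theta(X_{s,c})^Tf_\theta(X_{s,c})W$ and of the cosine-versus-Euclidean metric discrepancy makes the source of the word \emph{approximately} more precise than the paper's treatment, but the underlying argument is the same.
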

\begin{proof}
Denote the parameter of the final linear layer as $W = [{w}_0, {w}_1, \dots, {w}_{C-1}]$, where ${w}_c \in \mathbb{R}^F$ is the weight vector connected to the $c$-th class. 
And for each training sample $({x}_j)$ with class label $c$, we get the loss function as follows:
\begin{equation}\small
\begin{split}
    l &= \Vert y_i - f_\theta(x_j)W\Vert^2=\sum_{c'=0}^{C-1}((\mathbbm{1}_{c'=c}-f_\theta(x_j)w_{c'})^2,\\
\end{split}
\end{equation}
where $y_i$ is a one-hot vector with the $c$-th entry $1$ and others $0$, and $\mathbbm{1}$ is the indicator function. 
Then, the partial derivative of the classification loss on $j$-th sample \textit{w.r.t} the $c'$-th neuron is:
\begin{equation}\small
\label{eqg}
    {g}_{c',j} = (p_{c',j}-\mathbbm{1}_{c'=c})\cdot f_\theta(x_j)^T,
\end{equation}
where $p_{c',j}=f_\theta(x_j)w_{c'}$. 
For seminal works on gradient matching~\cite{zhao2021dataset} and distribution matching~\cite{DBLP:journals/corr/abs-2110-04181}, objective functions are calculated for each class separately. 
For the $c$-th class, the gradient of the $c'$-th weight vector $w_{c'}$ is:
\begin{equation}\small
\begin{split}
    g_{c'}&=\frac{1}{N_c}\sum_{j=1}^{N_c}{g}_{c',j}=\frac{1}{N_c}\sum_{j=1}^{N_c}(p_{c',j}-\mathbbm{1}_{c'=c})\cdot f_\theta(x_j)^T\\
    &\approx (q-\mathbbm{1}_{c'=c})\frac{1}{N_c}\sum_{j=1}^{N_c}f_\theta(x_j)^T,
\end{split}
\label{eqg2}
\end{equation}
where the approximation is valid since the network parameterized by $\theta$ is randomly initialized, and the prediction for each class is near uniform, denoted as $q$. 
The term $\frac{1}{N_c}\sum_{j=1}^{N_c}f_\theta(x_j)^T$ is actually the first moment of the output distribution for samples of the $c$-th class, \textit{i.e.}, $\mu_{\theta,s,c}$ for $\mathcal{S}$ and $\mu_{\theta,t,c}$ for $\mathcal{T}$. 
Thus, first-moment distribution matching is approximately equivalent to gradient matching for each class in this case. 
\end{proof}

\textbf{Single-Step Parameter Matching v.s. Second-Order Distribution Matching.} 
The relationship between gradient matching and first-order distribution matching discussed above relies on an important assumption that the outputs of networks are uniformly distributed, which requires fully random neural networks without any learning bias. 
Here we show in the following proposition that without such a strong assumption, we can still bridge gradient matching and distribution matching. 
\begin{prop}
Second-order distribution matching objective optimizes an upper bound of gradient matching objective for kernel ridge regression models. 
\end{prop}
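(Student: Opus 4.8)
The plan is to take the closed-form per-class gradient derived in the previous proposition and push it one step further, this time \emph{without} invoking the uniform-prediction assumption that underpinned the first-moment result. Recall from Eq.~\ref{eqg2} that the gradient of the loss on the $c$-th class with respect to the $c'$-th weight vector is $g_{c'}=\frac{1}{N_c}\sum_{j=1}^{N_c}(p_{c',j}-\mathbbm{1}_{c'=c})f_\theta(x_j)^T$ with $p_{c',j}=f_\theta(x_j)w_{c'}$. Substituting $p_{c',j}$ and using that $f_\theta(x_j)w_{c'}$ is a scalar, I would rewrite this purely in terms of two statistics of the class-conditional feature distribution:
\[
g_{c'}=\Big(\tfrac{1}{N_c}\sum_{j}f_\theta(x_j)^T f_\theta(x_j)\Big)w_{c'}-\mathbbm{1}_{c'=c}\,\tfrac{1}{N_c}\sum_{j}f_\theta(x_j)^T=\Sigma_c\,w_{c'}-\mathbbm{1}_{c'=c}\,\mu_c,
\]
where $\mu_c$ is the first moment and $\Sigma_c=\frac{1}{N_c}\sum_j f_\theta(x_j)^T f_\theta(x_j)$ is the (uncentered) second moment. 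This identity is the crux of the argument and the step I expect to demand the most care, since one must track the row/column conventions for $f_\theta(x_j)$ so that $\Sigma_c$ is genuinely the $F\times F$ second-moment matrix rather than a scalar.

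With this identity in hand, the gradient-matching discrepancy for a fixed class becomes $g_{c'}^{\mathcal{S}}-g_{c'}^{\mathcal{T}}=(\Sigma_c^{\mathcal{S}}-\Sigma_c^{\mathcal{T}})w_{c'}-\mathbbm{1}_{c'=c}(\mu_c^{\mathcal{S}}-\mu_c^{\mathcal{T}})$. I would then bound its norm by the triangle inequality followed by submultiplicativity, $\|(\Sigma_c^{\mathcal{S}}-\Sigma_c^{\mathcal{T}})w_{c'}\|\le\|\Sigma_c^{\mathcal{S}}-\Sigma_c^{\mathcal{T}}\|\,\|w_{c'}\|$, square with the elementary inequality $(a+b)^2\le 2a^2+2b^2$, and sum over all output neurons $c'$. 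Because the indicator term survives only at $c'=c$ and because $\sum_{c'}\|w_{c'}\|^2=\|W\|^2$ is a fixed constant for the frozen feature extractor, this collapses to
\[
\sum_{c'}\big\|g_{c'}^{\mathcal{S}}-g_{c'}^{\mathcal{T}}\big\|^2\le 2\|W\|^2\,\big\|\Sigma_c^{\mathcal{S}}-\Sigma_c^{\mathcal{T}}\big\|^2+2\,\big\|\mu_c^{\mathcal{S}}-\mu_c^{\mathcal{T}}\big\|^2.
\]

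Finally, I would observe that the right-hand side, summed over classes, is exactly a second-order distribution-matching objective: it penalizes the discrepancy in both the first moment $\mu_c$ and the second moment $\Sigma_c$ of the class-conditional feature distributions, with the data-independent factor $\|W\|^2$ absorbed into a fixed weighting constant. Hence driving the second-order distribution-matching objective to zero drives down an upper bound of the per-class gradient-matching objective, establishing the claim. The one subtlety worth flagging is the choice of matrix norm: the submultiplicative step is cleanest with the spectral norm on $\Sigma_c^{\mathcal{S}}-\Sigma_c^{\mathcal{T}}$, but since the spectral norm is dominated by the Frobenius norm, the bound remains valid if one states the second-moment term in Frobenius form, which is the quantity most naturally matched in practice.
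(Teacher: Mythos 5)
Your proof is correct and follows essentially the same route as the paper's: both decompose the least-squares gradient with respect to $W$ into a second-moment term acting on $W$ minus a first-moment term, and then apply a triangle-type inequality with submultiplicativity to obtain the bound (the paper does this with the aggregate gradient $\frac{1}{|\mathcal{S}|}f_\theta(X_s)^T(f_\theta(X_s)W-Y_s)$, where the $f_\theta(X)^TY$ term plays the role of your class-conditional means, while you work class by class). Your version is in fact slightly tighter in bookkeeping, since you retain the factor of $2$ from $(a+b)^2\le 2a^2+2b^2$ that the paper's displayed inequality silently drops.
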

\begin{proof}
The gradient matching objective in this case satisfies:
\begin{equation}\small
\small
\begin{split}
    \mathcal{L}=&\Vert\frac{1}{|\mathcal{S}|}\nabla l(f_\theta(X_s), W)-\frac{1}{|\mathcal{T}|}\nabla l(f_\theta(X_t), W)\Vert^2\\
    =&\Vert\frac{1}{|\mathcal{S}|}f_\theta(X_s)^T(f_\theta(X_s)W-Y_s)\\&-\frac{1}{|\mathcal{T}|} f_\theta(X_t)^T(f_\theta(X_t)W-Y_t)\Vert^2\\
    =&\Vert(\frac{1}{|\mathcal{S}|}f_\theta(X_s)^Tf_\theta(X_s)-\frac{1}{|\mathcal{T}|}f_\theta(X_t)^Tf_\theta(X_t))W\\
     &-(\frac{1}{|\mathcal{S}|}f_\theta(X_s)^TY_s-\frac{1}{|\mathcal{T}|}f_\theta(X_t)^TY_t)\Vert^2 \\
     \leq&\Vert\frac{1}{|\mathcal{S}|}f_\theta(X_s)^Tf_\theta(X_s)-\frac{1}{|\mathcal{T}|}f_\theta(X_t)^Tf_\theta(X_t)\Vert^2\Vert W\Vert^2 \\
     &+\Vert\frac{1}{|\mathcal{S}|}f_\theta(X_s)^TY_s-\frac{1}{|\mathcal{T}|}f_\theta(X_t)^TY_t\Vert^2.
\end{split}
\label{eqg3}
\end{equation}
The $r.h.s$ of the inequality of Eq.~\ref{eqg3} measures the difference between first and second-order statistics of synthetic and real data, \textit{i.e.}, mean, and correlation. 
In this way, distribution matching essentially optimizes an upper bound of gradient matching. 
\end{proof}

\subsection{Synthetic Data Parameterization}

\begin{figure}[t]
\centering
\includegraphics[width=\linewidth]{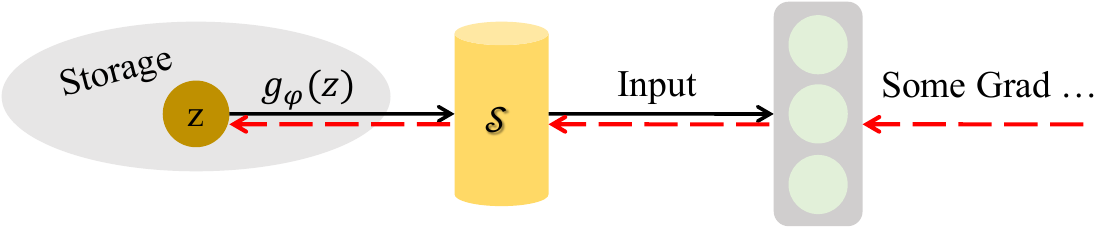} 
\vspace{-0.6cm}
\caption{Synthetic data parameterization. It is a sub-area in DD and orthogonal to the objectives for dataset distillation, since the process for generating  $\mathcal{S}$ is differentiable and gradients for updating $\mathcal{S}$ can be further backpropagated to synthetic parameters.}
\vspace{-0.6cm}
\label{fig:parameterization}
\end{figure}

One of the essential goals of dataset distillation is to synthesize informative datasets to improve training efficiency, given a limited storage budget. 
In other words, as for the same limited storage, more information on the original dataset is expected to be preserved so that the model trained on condensed datasets can achieve comparable and satisfactory performance. 
In the image classification task, the typical method of dataset distillation is to distill the information of the dataset into a few synthetic images with the same resolution and number of channels as real images. 
However, due to the limited storage, information carried by a small number of images is limited. 
Moreover, with the same format as real data, it is unclear whether synthetic data contain useless or redundant information. 
Focusing on these concerns and orthogonal to optimization objectives in DD, a series of works propose different ways of synthetic data parameterization. 
In a general form, for a synthetic dataset, some codes $z\in \mathcal{Z} \subset \mathbb{R}^{D'}$, $\mathcal{Z}=\{(z_j, y_j)\}|^{|\mathcal{Z}|}_{j=1}$ in a format other than the raw shape are used for storage. 
And there is some function $g_{\phi}: \mathbb{R}^{D'} \rightarrow \mathbb{R}^D$ parameterized by $\phi$ that maps a code with $d'$ dimensions to the format of raw images for downstream training. 
In this case, the synthetic dataset is denoted as:
\begin{equation}\small
    \mathcal{S}=(X_s,Y_s),X_s=\{g_{\phi}(z_j)\}^{|\mathcal{S}|}_{j=1}.\label{eq:param}
\end{equation}
As shown in Fig.~\ref{fig:parameterization}, when training, $\phi$ and $\mathcal{Z}$ can be updated in an end-to-end fashion, by further backpropagating the gradient for $\mathcal{S}$ to them, since the process for data generation is differentiable. 

\subsubsection{Differentiable Siamese Augmentation}\label{dsa}
Differentiable siamese augmentation~(DSA) is a set of augmentation policies designed to improve the data efficiency, including crop~\cite{krizhevsky2017imagenet}, cutout~\cite{devries2017improved}, flip, scale, rotate, and color jitters~\cite{krizhevsky2017imagenet} operations. 
It is first applied to the dataset distillation task by Zhao \textit{et al.}~\cite{zhao2021datasetb} to increase the data efficiency and thus generalizability. 
Here, $g_\phi(\cdot)$ is a family of image transformations parameterized with $\phi\sim\Phi$, where $\phi$ is the parameter for data augmentation, and a code $z$ still holds the same format as a raw image. 
It is differentiable so that the synthetic data can be optimized via gradient descent. 
Moreover, the parameter for data augmentation is the same for synthetic and real samples within each iteration to avoid the averaging effect. 
This technique has been applied in many following DD methods, \textit{e.g.}, DM~\cite{DBLP:journals/corr/abs-2110-04181}, MTT~\cite{cazenavette2022dataset}, TESLA~\cite{cui2022scaling}, \textit{etc}. 

\subsubsection{Upsampling}
According to experimental observations, Kim \textit{et al.}~\cite{kim2022dataset} find that the size of the synthetic dataset rather than resolution dominates its performance. 
Motivated by this, they propose a synthetic data parameterization strategy termed IDC, which uses multi-formation to increase the number of synthetic data under the same storage budget by reducing the data resolution. 
Intuitively, the format of $z$ is a down-sampled format of a raw image, \textit{e.g.}, $1/2$ down-sampling such that the number of synthetic images can become quadrupled, and $g_{\phi}$ here is a parameter-free up-sampling function, whose computational overhead is negligible. 

\subsubsection{Generators and Latent Vectors}
A generator is capable of producing data from some latent vectors, which provide a more compact representation of raw data by learning hidden patterns of data formation. 
Parameterizations based on generators and latent vectors can be regarded as a decomposition of shared knowledge and distinct knowledge across various samples, where generators extract common patterns while different latent vectors store different magnitudes for these patterns. 
The dimension of latent vectors is typically lower than that of original formats. 
Thus, given the same storage budget, more synthetic data can be obtained by storing generators and latent vectors instead of raw samples, and more information on original datasets can be preserved, which indicates a more data-efficient parameterization.
In this case, $z$ in Eq.~\ref{eq:param} denotes latent codes and $g_{\phi}$ denotes the generator parameterized by $\phi$. 

\textbf{GAN.} 
Zhao \textit{et al.}~\cite{zhao2022synthesizing} propose to generate informative synthetic samples via GAN. 
In this work, $g_\phi$ is a pretrained and fixed GAN generator. 
Motivated by the observation that latent vectors obtained by GAN inversion are more informative than those sampled randomly for downstream training, the method learns a set of latent vectors initialized with GAN inversion of raw data. 
In specific, it first initializes the whole latent set $\mathcal{Z}$ by GAN inversion~\cite{abdal2019image2stylegan}, such that the generated result $g_{\phi}(z_j)$ with each latent vector $z_j$ is as close as possible to the corresponding real image in the original training dataset $\mathcal{T}$:
\begin{equation}\small
    \mathop{\arg\min}\limits_{z}\Vert h(g_\phi(z), \omega) - h(x, \omega)\Vert^2+\lambda\Vert g_\phi(z) - x\Vert^2,
\end{equation}
where $h(\cdot, \omega)$ is a pre-trained and fixed feature extractor with parameter $\omega$, and $\lambda$ is the weight to balance the two terms. 
Then in the training phase, the objective function is based on the combination of DM and a regularization term:
\begin{equation}\small
\begin{split}
    \mathcal{L}&=(1-\lambda')\mathcal{L}_{DM} + \lambda' R, \\
    \mathcal{L}_{DM}&=\mathbb{E}_{\theta\in\Theta}[\Vert\frac{1}{N}\sum_{j=1}^{N}f_{\theta}(g_{\phi}(z_j))-\frac{1}{N}\sum_{j=1}^{N}f_{\theta}(x_j)\Vert^2],\\
    R&=\mathbb{E}_{\theta\in\Theta}[\sum_{j=1}^{N}\Vert f_\theta(g_\phi(z_j)) - f_\theta(x_j)\Vert^2],
\end{split}
\end{equation}
where the $x_i$ and $z_i$ are paired, and $\lambda'$ is another hyperparameter for balance. 
The regularization item can better preserve the training information compared to the feature alignment in GAN inversion. 

\textbf{Addressing Matrices.}
Deng \textit{et al.}~\cite{deng2022remember} propose compressing the critical information of original datasets into compact addressable memories. 
Instead of learning each synthetic sample separately, it learns a common memory representation for all classes and samples, which can be accessed through learnable addressing matrices to construct the synthetic dataset. 
Thus, the size of the synthetic dataset does not necessarily correlate with the number of classes linearly, and the high compression rate achieves better data efficiency. 
In this case, the shared common memory, or basis, serves as the parameter $\phi$ for the mapping function $g$, and the address, or coefficient, serves as the latent code $z$ for each sample. 
Specifically, assume that there are $C$ classes, and we want to retrieve $R$ samples for each class. 
Then we have $R$ addressing matrices $A_j\in\mathbb{R}^{C\times K}$ for $1\leq j\leq R$. 
$K$ is the number of bases vectors, and the memory $\phi\in\mathbb{R}^{K\times D}$, where $D$ is the dimension for each sample. 
The $j$-th synthetic sample for the $c$-th class $x_{c,j}$ is constructed by the following linear transformation:
\begin{equation}\small
    z_{c,j}=y_cA_j,\quad x_{c,j}=z_{c,j}\phi,
\end{equation}
where $y_c$ is the corresponding one-hot vector. 
For optimization, the method proposes an improved meta learning based performance matching framework with momentum used for inner loops. 

\textbf{Decoders.}
Beyond the linear transformation-based method by Deng \textit{et al.}~\cite{deng2022remember}, HaBa~\cite{liu2022dataset} and KFS~\cite{lee2022dataset} adopt non-linear transformation for the mapping function $g$. 
HaBa uses the multi-step parameter matching method MTT~\cite{cazenavette2022dataset} for optimization while KFS is based on distribution matching~\cite{DBLP:journals/corr/abs-2110-04181}. 
Nevertheless, they use similar techniques for synthetic data parameterization. 
Specifically, there are $|\mathcal{Z}|$ latent codes (bases) and $|\Phi|$ neural decoders (hallucinators). 
Latent codes may include distinct knowledge for different samples, while decoders contain shared knowledge across all samples. 
A synthetic sample can be generated by sending the $j$-th latent code $z_j$ to the $k$-th decoder parameterized by $\phi_k$:
\begin{equation}\small
    x_{j,k}=g_{\phi_k}(z_j),
\end{equation}
where $z_j\in\mathcal{Z}$, $1\leq j\leq |\mathcal{Z}|$ and $\phi_k\in\Phi$, $1\leq k\leq |\Phi|$. 
In other words, different latent codes and decoders are combined interchangeably and arbitrarily, such that the size of the synthetic dataset can reach up to $|\mathcal{Z}|\times|\Phi|$, which increases the data efficiency exponentially. 

\subsection{Label Distillation Methods.} \label{label distill}
In dataset distillation for image classification tasks, most methods fix the class label $Y_s$ as a one-hot format and only learn synthetic images $X_s$. 
Some works~\cite{sucholutsky2021soft,nguyen2020dataset,deng2022remember,zhou2022dataset} find that making labels learnable can improve performance. 
Bohdal \textit{et al.}~\cite{bohdal2020flexible} reveal that even only learning labels without learning images can achieve satisfactory performance. 
Moreover, Cui \textit{et al.}~\cite{cui2022scaling} takes soft labels predicted by a teacher model trained on real datasets, which is demonstrated to be significantly effective when condensing datasets with a large number of classes. 
Generally, as plug-and-play schemes, how to deal with labels in DD is orthogonal to optimization objectives and is compatible with all existing ones. 

\section{Applications}\label{sec:application}
Benefiting from the nature of the high compression rate, the research of dataset distillation has led to many successful innovative applications in various fields like continual learning and federated learning, which will be introduced in this section. 

\subsection{Continual Learning}
Continual learning~\cite{kirkpatrick2017overcoming} aims to remedy the catastrophic forgetting problem caused by the inability of neural networks to learn a stream of tasks in sequence. 
A widely used strategy is rehearsal, maintaining representative samples that can be re-used to retain knowledge about previous tasks~\cite{buzzega2021rethinking,liu2021adaptive,rebuffi2017icarl,prabhu2020gdumb}. 
In this case, how to preserve as much knowledge of previous data in a buffer space with limited memory becomes an essential point. 
Benefiting from the high compression rate, the distilled data can capture the essence of the whole dataset, making it an inspiring application in continual learning. 
Several works~\cite{zhao2021dataset,DBLP:journals/corr/abs-2110-04181,liu2020mnemonics,zhou2022dataset,rosasco2022distilled} have directly applied dataset distillation techniques to continual learning scenarios. 
Instead of selecting the representative samples of historical data, they train synthetic datasets for historical data to keep the knowledge of the past task. 

Besides, some synthetic dataset parameterization methods are also successfully applied to improve memory efficiency. 
Kim \textit{et al.}~\cite{kim2022dataset} use data partition and upsampling to increase the number of training samples under the same storage budget. 
Deng \textit{et al.}~\cite{deng2022remember} propose using addressable memories and addressing matrices to construct synthetic data of historical tasks, where different tasks can share the same memory base so that the memory cost will not increase linearly or positively correlate with the number of past tasks. 
Sangermano \textit{et al.}~\cite{sangermano2022sample} propose obtaining the synthetic dataset by linear weighted combining historical images, and the optimization target is linear coefficients rather than pixels of synthetic data. 
Wiewel \textit{et al.}~\cite{wiewel2021condensed} propose a method that learns a weighted combination of shared components for samples in a specific class rather than directly learning the synthetic dataset. 
Masarczyk \textit{et al.}~\cite{masarczyk2020reducing} 
use the generative model to create the synthetic dataset and form a sequence of tasks to train the model, and the parameters of the generative model are fine-tuned by the evaluation loss of the learner on real data. 

\subsection{Federated Learning}
Federated learning (FL)~\cite{konevcny2016federated,mcmahan2017communication,yang2019federated} develops a privacy-preserving distributed model training scheme such that multiple clients collaboratively learn a model without sharing their private data. 
A standard way of federated learning to keep data private is to transmit model updates instead of private user data. 
However, this may cause an increased communication cost, for the size of model updates may be very large and make it burdensome for clients when uploading frequently. 

To remedy this problem and improve communication efficiency, some research~\cite{goetz2020federated,zhou2020distilled,xiong2022feddm,song2022federated,liu2022meta,hu2022fedsynth} on federated learning transmits the locally generated synthetic datasets instead of model updates from clients to the server. 
The motivation is quite straightforward: the model parameters are usually much larger than a small number of data points which can reduce the costs significantly in upload transmission from clients back to the server, and the synthetic dataset can preserve the essence of the whole original dataset. 
It is worth noting that label distillation~\cite{bohdal2020flexible} is usually considered to prevent synthetic data from representing an actual label for privacy issues~\cite{goetz2020federated}. 
To further strengthen privacy protection, Zhou \textit{et al.}~\cite{zhou2020distilled} combine dataset distillation and distributed one-shot learning, such that for every local update step, each synthetic data successively updates the network for one gradient descent step. 
Thus, synthetic data is closely bonded with one specific network weight, and the eavesdropper cannot reproduce the result with only leaked synthetic data. 
Xiong \textit{et al.}~\cite{xiong2022feddm} adopt distribution matching~\cite{DBLP:journals/corr/abs-2110-04181} to generate synthetic data and update synthetic data with the Gaussian mechanism to protect the privacy of local data. 
Song \textit{et al.}~\cite{song2022federated} apply dataset distillation in one-shot federated learning and propose a novel evaluation metric $\gamma$-accuracy gain to tune the importance of accuracy and analyze communication efficiency. 
Liu \textit{et al.}~\cite{liu2022meta} develop two mechanisms for local updates: dynamic weight assignment, when training the synthetic dataset, assigning dynamic weights to each sample based on its training loss; meta-knowledge sharing, sharing local synthetic dataset among clients to mitigate heterogeneous data distributions among clients. 

In addition to reducing the communication cost of each round, dataset distillation can also be applied to reduce the communication epochs to reduce the total communication consumption. 
Pi \textit{et al.}~\cite{pi2022dynafed} propose to use fewer rounds of standard federated learning to generate the synthetic dataset on the server by applying multi-step parameter matching on global model trajectories and then using the generated synthetic dataset to complete subsequent training.

\subsection{Neural Architecture Search}
Neural architecture search (NAS)~\cite{elsken2019neural,wistuba2019survey} aims to discover the optimal structure of a neural network from some search space. 
It typically requires expensive training of numerous candidate neural network architectures on full datasets. 

In this case, datasets by DD, benefiting from its small size, can be served as proxy sets to accelerate model evaluation in neural architecture search, which is proved feasible in some works~\cite{zhao2021dataset,zhao2021datasetb,DBLP:journals/corr/abs-2110-04181}. 
Besides, Such \textit{et al.}~\cite{such2020generative} propose a method named generative teaching networks combining a generative model and a learner to create the synthetic dataset, which is learner-agnostic, \textit{i.e.}, generalizes to different latent learner architectures and initializations. 

\subsection{Privacy, Security and Robustness}\label{sec:privacy}
Machine learning suffers from a wide range of privacy attacks~\cite{lyu2020threats}, \textit{e.g.}, model inversion attack~\cite{fredrikson2015model}, membership inference attack~\cite{shokri2017membership}, property inference attack~\cite{melis2019exploiting}. 
Dataset distillation provides a perspective to start from the data alone, protect the privacy and improve model robustness.

There are some straightforward applications of dataset distillation to protect the privacy of the dataset, for synthetic data may look unreal to recognize the actual label, and it is hard to reproduce the result with synthetic data without knowing the architecture and initialization of the target model. 
For example, remote training~\cite{sucholutsky2020secdd} transmits synthetic datasets with distilled labels instead of the original dataset to protect data privacy.

More for data privacy protection, Dong \textit{et al.}~\cite{dong2022privacy} point out that dataset distillation can offer privacy protection to prevent unintentional data leakage. 
They emerge dataset distillation techniques into the privacy community and give the theoretical analysis of the connection between dataset distillation and differential privacy. 
Also, they empirically validate that the synthetic dataset is irreversible to original data in terms of similarity metrics of $L_2$ and LPIPS~\cite{kettunen2019lpips}. 
Chen \textit{et al.}~\cite{chen2022private} apply dataset distillation to generate high-dimensional data with differential privacy guarantees for private data sharing with lower memory and computation costs. 

As for the robustness of model improvement, Tsilivis \textit{et al.}~\cite{tsilivis2022can} proposes that dataset distillation can provide a new perspective for solving robust optimization. 
They combine the adversarial training and KIP~\cite{nguyen2020dataset,nguyen2021dataset} to optimize the training data instead of model parameters with high efficiency. It is beneficial that the optimized data can be deployed with other models and give favorable transferability. 
Also, it enjoys satisfactory robustness against PGD attacks~\cite{madry2017towards}. 
Huang \textit{et al.}~\cite{https://doi.org/10.48550/arxiv.2211.10752} study the robust dataset learning problem such that the network trained on the dataset is adversarially robust. 
They formulate robust dataset learning as a min-max, tri-level optimization problem where the robust error of adversarial data on the robust data parameterized model is minimized.

Furthermore, traditional backdoor attacks, which inject triggers into the original data and use the malicious data to train the model, cannot work on the distilled synthetic dataset as it is too small for injection, and inspection can quickly mitigate such attacks. 
However, Liu \textit{et al.}~\cite{liu2023backdoor} propose a new backdoor attack method, DOORPING, which attacks during the dataset distillation process rather than afterward model training. 
Specifically, DOORPING continuously optimizes the trigger in every epoch before updating the synthetic dataset to ensure that the trigger is preserved in the synthetic dataset. 
Experiment results show that nine defense mechanisms for the backdoor attacks at three levels, \textit{i.e.}, model-level~\cite{li2021neural,liu2019abs,wang2019neural}, input-level~\cite{cho2020dapas,gao2019strip,kwon2021defending} and dataset-level~\cite{hayase2021spectre,tang2019demon,tran2018spectral}, are unable to mitigate the attacks effectively.

\subsection{Graph Neural Network}
Graph neural networks (GNNs)~\cite{battaglia2018relational,kipf2016semi,velivckovic2017graph,wu2020comprehensive} are developed to analyze graph-structured data, representing many real-world data such as social networks~\cite{fan2019graph} and chemical molecules~\cite{ying2018hierarchical}. 
Despite their effectiveness, similar to traditional deep neural networks, GNNs suffer from data-hungry problems: large-scale datasets are required to learn powerful representations. 
Motivated by dataset distillation, graph-structured data can also be distilled into a synthetic and simplified one to improve the training efficiency while preserving the performance. 

Jin \textit{et al.}~\cite{jin2021graph} apply gradient matching~\cite{zhao2021dataset} in graph distillation, distilling both graph structure and node attributes. 
It is worth noting that the graph structure and node features may have connections, such that node features can represent the graph structure matrix~\cite{pfeiffer2014attributed}. 
In this case, the optimization target is only node features, which avoids the quadratic increase of computation complexity with the number of synthetic graph nodes. 
Based on that, Jin \textit{et al.}~\cite{jin2022condensing} propose a more efficient graph condensation method via one-step gradient matching without training networks. 
To handle the discrete data, they formulate the graph structure as a probabilistic model that can be learned in a differentiable manner. 
Liu \textit{et al.}~\cite{liu2022graph} adopt distribution matching~\cite{DBLP:journals/corr/abs-2110-04181} in graph condensation, synthesizing a small graph that shares a similar distribution of receptive fields with the original graph. 

\subsection{Recommender System}
Recommender systems~\cite{konstan2012recommender,isinkaye2015recommendation} aim to give users personalized recommendations, content, or services through a large volume of dynamically generated information, such as item features, user past behaviors, and similar decisions made by other users. 
The recommender system research also faces a series of challenges on heavy computational overheads caused by training models on massive datasets, which can involve billions of user-item interaction logs. 
At the same time, the security of user data should also be considered~\cite{narayanan2008robust}. 
In this case, Sachdeva \textit{et al.}~\cite{sachdeva2022infinite} develop a data distillation framework to distill the collaborative filtering data into small, high-fidelity data summaries. 
They take inspiration from KIP~\cite{nguyen2020dataset,nguyen2021dataset} to build an efficient framework. 
To deal with the discrete nature of the recommendation problem, instead of directly optimizing the interaction matrix, this method learns a continuous prior for each user-item pair, and the interaction matrix is sampled from the learned distribution. 

\subsection{Text Classification}
Text classification~\cite{minaee2021deep} is one of the classical problems in natural language processing, which aims to assign labels or tags to textual units. 
The latest developed language models~\cite{brown2020language}, which require massive datasets, are burdensome to train, fine-tune and use.
Motivated by dataset distillation, it is possible to generate a much smaller dataset to cover the knowledge in the original dataset and provide a more efficient dataset to train models. 
Nevertheless, the discrete nature of text data makes dataset distillation a challenging problem. 
In this case, Li \textit{et al.}~\cite{li2021data} propose to generate words embeddings instead of actual text words to form synthetic datasets and adopt performance matching~\cite{wang2018dataset} as the optimization objective.
Moreover, label distillation can also be applied to the text classification task. 
Sucholutsky \textit{et al.}~\cite{sucholutsky2021soft} propose to embed text into a continuous space, and then train the embedded text data with the soft-label image distillation method.

\subsection{Knowledge Distillation}
Most of the existing KD studies~\cite{hinton2015distilling,romero2014fitnets,zagoruyko2016paying,tian2019contrastive,chen2021distilling} transfer the knowledge hints of the whole sample space during the training process while neglecting the variation of student's capacity over the training progress. 
Such redundant knowledge causes two issues: 1. more computational costs, \textit{e.g.}, memory footprint and GPU time; 2. poor performance, distracting the attention of the student model from the proper knowledge and weakening the learning efficiency. 
In this case, Li \textit{et al.}~\cite{li2022knowledge} propose a novel KD paradigm of knowledge condensation, which performs knowledge condensation and model distillation alternately. 
A sample is condensed according to its value determined by the feedback from the student model iteratively. 


\subsection{Medical} 
Sharing of medical datasets is crucial to establish the cross-hospital flow of medical information and improve the quality of medical services~\cite{kumar2021integration}, \textit{e.g.}, constructing high-accuracy computer-aided diagnosis systems~\cite{weitzman2010sharing}. 
However, potential issues on privacy protection~\cite{kaissis2020secure}, transmission, and storage costs are unneglectable. 
To solve this problem, Li \textit{et al.}~\cite{li2020soft,li2022dataset,li2022compressed} leverage dataset distillation to extract the essence of a dataset and construct a much smaller anonymous synthetic dataset for data sharing. 
They handle large-size and high-resolution medical datasets by dividing them into several patches and labeling them into different categories manually. 
Also, they successfully apply performance matching~\cite{wang2018dataset}, and multi-step parameter matching~\cite{cazenavette2022dataset} objectives, along with the label distillation~\cite{bohdal2020flexible} to generate synthetic medical datasets. 

\subsection{Fashion, Art and Design}
The images produced by dataset distillation have a certain degree of visual appreciation. 
It retains the characteristics and textures of the given category and produces artistic fragments. 
Cazenavette \textit{et al.}~\cite{cazenavette2022wearable} propose a method that generates tileable distilled texture, which is aesthetically pleasing to be applied to practical tasks, \textit{e.g.}, clothing. 
They update the canvas by applying random crops on a padded distilled canvas and performing dataset distillation on those crops. 
For fashion compatibility learning, Chen \textit{et al.}~\cite{chen2022learning} propose using designer-generated data to guide outfit compatibility modeling. 
They extract disentangled features from a set of fashion outfits, generate the fashion graph, and leverage the designer-generated data through a dataset distillation scheme, which benefits the fashion compatibility prediction.

\begin{table*}[t]
\tiny
\renewcommand{\arraystretch}{0.8}
\centering
\caption{Comparison for different dataset distillation methods on the same architecture with training. $\dag$ adopt momentum~\cite{deng2022remember}. Please refer to the supplementary for the performance of all the existing DD methods.}
\vspace{-0.2cm}
\resizebox{\linewidth}{!}{
\begin{tabular}{c l c c c c c c c}
\toprule
\multirow{3}{*}{\bf Dataset} & \multirow{3}{*}{\bf Img/Cls} & \multicolumn{6}{c}{\bf Method} \\
\cmidrule{3-8}
& &DD$^\dag$~\cite{wang2018dataset,deng2022remember} & DC~\cite{zhao2021dataset} & DSA~\cite{zhao2021datasetb} & DM~\cite{DBLP:journals/corr/abs-2110-04181} & MTT~\cite{cazenavette2022dataset} & FRePo~\cite{zhou2022dataset}\\
\midrule
\multirow{3}{*}{\bf MNIST}
&1 & \textbf{95.2 $\pm$ 0.3} & 91.7 $\pm$ 0.5 & 88.7 $\pm$ 0.6 & 89.9 $\pm$ 0.8 & 91.4 $\pm$ 0.9 & 93.8 $\pm$ 0.6\\
&10 & 98.0 $\pm$ 0.1 & 97.4 $\pm$ 0.2 & 97.9 $\pm$ 0.1 & 97.6 $\pm$ 0.1 & 97.3 $\pm$ 0.1 & \textbf{98.4 $\pm$ 0.1}\\
&50 & 98.8 $\pm$ 0.1 & 98.8 $\pm$ 0.2 & \bf 99.2 $\pm$ 0.1 & 98.6 $\pm$ 0.1 & 98.5 $\pm$ 0.1 & \textbf{99.2 $\pm$ 0.1}\\
\midrule
\multirow{3}{*}{\bf Fashion-MNIST}
&1 & \textbf{83.4 $\pm$ 0.3} & 70.5 $\pm$ 0.6 & 70.6 $\pm$ 0.6 & 71.5 $\pm$ 0.5 & 75.1 $\pm$ 0.9 & 75.6 $\pm$ 0.5\\
&10 & \textbf{87.6 $\pm$ 0.4} & 82.3 $\pm$ 0.4 & 84.8 $\pm$ 0.3 & 83.6 $\pm$ 0.2 & 87.2 $\pm$ 0.3 & 86.2 $\pm$ 0.3\\
&50 & 87.7 $\pm$ 0.3 & 83.6 $\pm$ 0.4 & 88.8 $\pm$ 0.2 & 88.2 $\pm$ 0.1 & 88.3 $\pm$ 0.1 & \textbf{89.6 $\pm$ 0.1}\\
\midrule
\multirow{3}{*}{\bf CIFAR-10}
&1 & 46.6 $\pm$ 0.6 & 28.3 $\pm$ 0.5 & 28.8 $\pm$ 0.7 & 26.5 $\pm$ 0.4 & 46.3 $\pm$ 0.8 & \textbf{46.8 $\pm$ 0.7}\\
&10 & 60.2 $\pm$ 0.4 & 44.9 $\pm$ 0.5 & 53.2 $\pm$ 0.8 & 48.9 $\pm$ 0.6 & 65.3 $\pm$ 0.7 & \textbf{65.5 $\pm$ 0.6}\\
&50 & 65.3 $\pm$ 0.4 & 53.9 $\pm$ 0.5 & 60.6 $\pm$ 0.5 & 63.0 $\pm$ 0.4 & 71.6 $\pm$ 0.2 & \textbf{71.7 $\pm$ 0.2}\\
\midrule
\multirow{3}{*}{\bf CIFAR-100}
&1 & 19.6 $\pm$ 0.4 & 12.6 $\pm$ 0.4 & 13.9 $\pm$ 0.3 & 11.4 $\pm$ 0.3 & 24.3 $\pm$ 0.3 & \textbf{27.2 $\pm$ 0.4}\\
&10 & 32.7 $\pm$ 0.4 & 25.4 $\pm$ 0.3 & 32.3 $\pm$ 0.3 & 29.7 $\pm$ 0.3 & 40.1 $\pm$ 0.4 & \textbf{41.3 $\pm$ 0.2} \\
&50 & 35.0 $\pm$ 0.3 & 29.7 $\pm$ 0.3 & 42.8 $\pm$ 0.4 & 43.6 $\pm$ 0.4 & \textbf{47.7 $\pm$ 0.2} & 44.3 $\pm$ 0.2\\
\midrule
\multirow{3}{*}{\bf Tiny-ImageNet}
&1 & - & 5.3 $\pm$ 0.2 & 6.6 $\pm$ 0.2 & 3.9 $\pm$ 0.2 & 8.8 $\pm$ 0.3 & \textbf{15.4 $\pm$ 0.3}\\
&10 & - & 11.1 $\pm$ 0.3 & 16.3 $\pm$ 0.2 & 13.5 $\pm$ 0.3 & 23.2 $\pm$ 0.2 & \textbf{24.9 $\pm$ 0.2}\\
&50 & - & 11.2 $\pm$ 0.3 & 25.3 $\pm$ 0.2 & 24.1 $\pm$ 0.3 & \bf 28.2 $\pm$ 0.5 & -\\
\bottomrule
\end{tabular}
}
\label{tab:performance}
\end{table*}

\begin{table*}[t]
\renewcommand{\arraystretch}{0.9}
\centering
\tiny
\caption{Cross-architecture transfer performance on CIFAR-10 with 10 Img/Cls. ConvNet is the default evaluation model used for each method. NN, IN and BN stand for no normalization, Instance Normalization, and Batch Normalization respectively. $\dag$ adopt momentum~\cite{deng2022remember}.}
\vspace{-0.2cm}
\resizebox{\linewidth}{!}{
\begin{tabular}{ l l  c c c cc cc c}
\toprule
\multirow{3}{*}{} & \multirow{3}{*}{\textbf{Train Arch}} & \multicolumn{8}{c}{\bf Evaluation Model} \\
\cmidrule{3-10}
   && Conv & Conv-NN & AlexNet-NN & AlexNet-IN & ResNet18-IN & ResNet18-BN & VGG11-IN & VGG11-BN\\
\midrule
DD$^\dag$~\cite{wang2018dataset,deng2022remember} & Conv-IN & 60.2 $\pm$ 0.4 & 17.8 $\pm$ 2.7 & 11.4 $\pm$ 0.1 & 38.5 $\pm$ 1.3 & 33.9 $\pm$ 1.1 & 16.0 $\pm$ 1.2 & 40.0 $\pm$ 1.7 & 21.2 $\pm$ 2.4 \\
DC~\cite{zhao2021dataset} & Conv-IN & 44.9 $\pm$ 0.5 & 31.9 $\pm$ 0.6 & 27.3 $\pm$ 1.6 & 45.5 $\pm$ 0.3 & 43.3 $\pm$ 0.6 & 32.7 $\pm$ 0.9 & 43.7 $\pm$ 0.6 & 37.3 $\pm$ 1.1\\
DSA~\cite{zhao2021datasetb} & Conv-IN & 53.2 $\pm$ 0.8 & 36.4 $\pm$ 1.5 & 34.0 $\pm$ 2.3 & 45.5 $\pm$ 0.6 & 42.3 $\pm$ 0.9 & 34.9 $\pm$ 0.5 & 43.1 $\pm$ 0.9 & 40.6 $\pm$ 0.5\\
DM~\cite{DBLP:journals/corr/abs-2110-04181} & Conv-IN & 49.2 $\pm$ 0.8 & 35.2 $\pm$ 0.5 & 34.9 $\pm$ 1.1 & 44.2 $\pm$ 0.9 & 40.2 $\pm$ 0.8 & 40.1 $\pm$ 0.8 & 41.7 $\pm$ 0.7 & 43.9 $\pm$ 0.4\\
MTT~\cite{cazenavette2022dataset} & Conv-IN &64.4 $\pm$ 0.9 & 41.6 $\pm$ 1.3 & 34.2 $\pm$ 2.6 & 51.9 $\pm$ 1.3 & 45.8 $\pm$ 1.2 & 42.9 $\pm$ 1.5 & 48.5 $\pm$ 0.8 & 45.4 $\pm$ 0.9 \\
FRePo~\cite{zhou2022dataset} & Conv-BN &65.6 $\pm$ 0.6 & 65.6 $\pm$ 0.6 & 58.2 $\pm$ 0.5 & 39.0 $\pm$ 0.3 & 47.4 $\pm$ 0.7 & 53.0 $\pm$ 1.0 & 35.0 $\pm$ 0.7 & 56.8 $\pm$ 0.6\\
\bottomrule
\end{tabular}
}
\vspace{-0.4cm}
\label{tab:cross-arch}
\end{table*}

\section{Experiments} \label{sec:experiments}
In this section, we conduct two sets of quantitative experiments, \textit{i.e.}, performance and training cost evaluation, on representative dataset distillation methods that cover three classes of primary condensation metrics, including DD~\cite{wang2018dataset}, DC~\cite{zhao2021dataset}, DSA~\cite{zhao2021datasetb}, DM~\cite{DBLP:journals/corr/abs-2110-04181}, MTT~\cite{cazenavette2022dataset}, and FRePo~\cite{zhou2022dataset}.

\subsection{Experimental Setup}
The performance of dataset distillation is mainly evaluated on the classification task. 
We follow typical settings in the area of dataset distillation, which are illustrated below:

\textbf{Datasets.} 
We adopt five datasets, including MNIST~\cite{lecun1998gradient}, Fashion-MNIST~\cite{xiao2017fashion}, CIFAR-10~\cite{krizhevsky2009learning}, CIFAR-100~\cite{krizhevsky2009learning}, and Tiny-ImageNet~\cite{le2015tiny}. 
These datasets are widely used as benchmarks in existing dataset distillation works.

\textbf{Networks.} 
We use the default ConvNet~\cite{krizhevsky2017imagenet} architecture provided by the authors, which mainly consists of multiple \texttt{Conv-ReLU-AvgPooling} blocks. 
Also, we evaluate the performance of synthetic datasets across different architectures. 
The network architectures of evaluation models are as follows:
ConvNet~\cite{krizhevsky2017imagenet} with no normalization layer, AlexNet~\cite{krizhevsky2017imagenet} with no normalization layer and instance normalization layer, ResNet~\cite{he2016deep} with instance normalization layer and batch normalization layer, and VGG~\cite{simonyan2014very} with instance normalization layer and batch normalization layer.

\textbf{Evaluation Protocol.}
For performance evaluation, we first generate synthetic datasets through candidate methods and train target networks using these datasets.
Then, the performance of trained models is evaluated by the corresponding test set of the original dataset. 
As for training cost evaluation, all methods are evaluated under full batch training for fair comparisons. 
Moreover, all methods adopt the default data augmentation strategies the authors provided for distillation performance evaluation.
While for generalization evaluation, we adopt the DSA~\cite{zhao2021datasetb} data augmentation in the evaluation model training process for fair comparisons.
Additionally, we measure the distillation performance of condensed datasets with 1, 10, and 50 images per class (IPC) for different datasets. 
The cross-architecture experiments are conducted on CIFAR-10 with 10 IPC. 
Efficiency evaluation experiments are conducted on the CIFAR-10 dataset with a wide range of IPC.

\textbf{Implementation Details.}
Here, we implement DC, DSA, DM, MTT, and FRePo by the official code provided by the authors. 
For DD, we modify the inner loop optimization by adding the momentum term, which is demonstrated to improve the performance substantially~\cite{deng2022remember}. 
FRePo is implemented in JAX and PyTorch, respectively, and the rest methods are implemented in PyTorch. 
For fair comparisons, all efficiency evaluation experiments are conducted on the single A100-SXM4-40GB.

\subsection{Performance Evaluation}

\textbf{Distillation Performance.}
We evaluate the distillation performance of different dataset distillation methods on the models with the same architecture as the default training models. 
The experiment settings are default as the authors provided. 
The testing results are shown in Table~\ref{tab:performance}. 
Please refer to the supplementary for the performance of all existing DD methods. 
Due to the limitation of GPU memory, some experimental results for Tiny-ImageNet are missing. 
As shown in Table~\ref{tab:performance}, in most cases, FRePo achieves state-of-the-art performance, especially for more complex datasets, \textit{e.g.}, CIFAR-10, CIFAR-100, and Tiny-ImageNet. 
At the same time, MTT often achieves second-best performance. 
Comparing the performance results of DC and DSA, although dataset augmentation cannot guarantee to benefit the distillation performance, it influences the results significantly in most times. 
As for DD, it obtains the SOTA performance in the case of MNIST 1 IPC, Fashion-MNIST 1 IPC, and 10 IPC. 
The performance of DM is often not good as other methods.

\textbf{Cross Architecture Generalization.}
The transferability of different methods is evaluated on a wide range of network architectures, unseen during training in the case of CIFAR-10 with 10 IPC. 
From results shown in Table \ref{tab:cross-arch}, the instance normalization layer seems to be a vital ingredient in several methods (DD, DC, DSA, DM, and MTT), which may be harmful to the transferability. 
The performance degrades significantly for most methods except FRePo when no normalization is adopted (Conv-NN, AlexNet-NN). 
If the normalization layer is inconsistent with the training architecture, the test results will drop significantly. 
It suggests that the synthetic dataset generated by those methods encodes the inductive bias of a particular training architecture. 
As for DD, the distilled data highly rely on the training model, and thus the performance degrades significantly on heterogeneous networks with different normalization layers, \textit{e.g.}, batch normalization.
Moreover, Comparing the transferability of DC and DSA, we find that DSA data augmentation can help train models with different architectures, especially those with batch normalization layers.

\begin{figure*}[ht]
  \centering
  \subfigure[]{
 \includegraphics[width=0.31\linewidth]{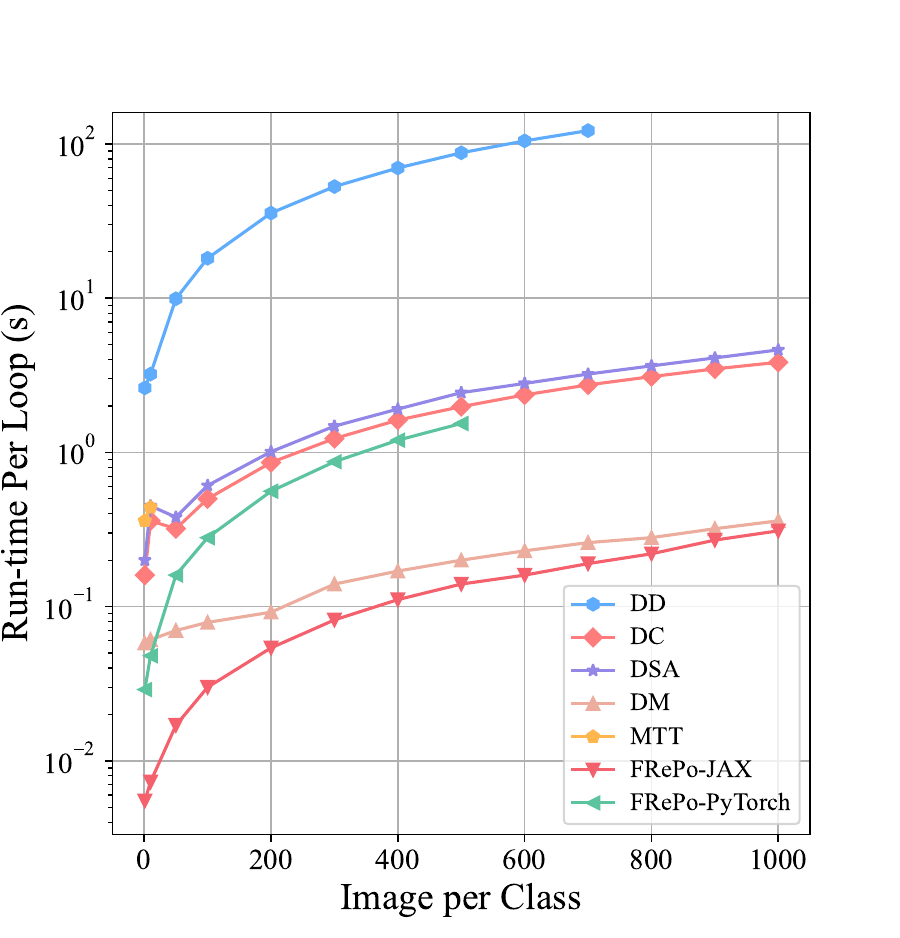}
  }
  \subfigure[]{
  \includegraphics[width=0.31\linewidth]{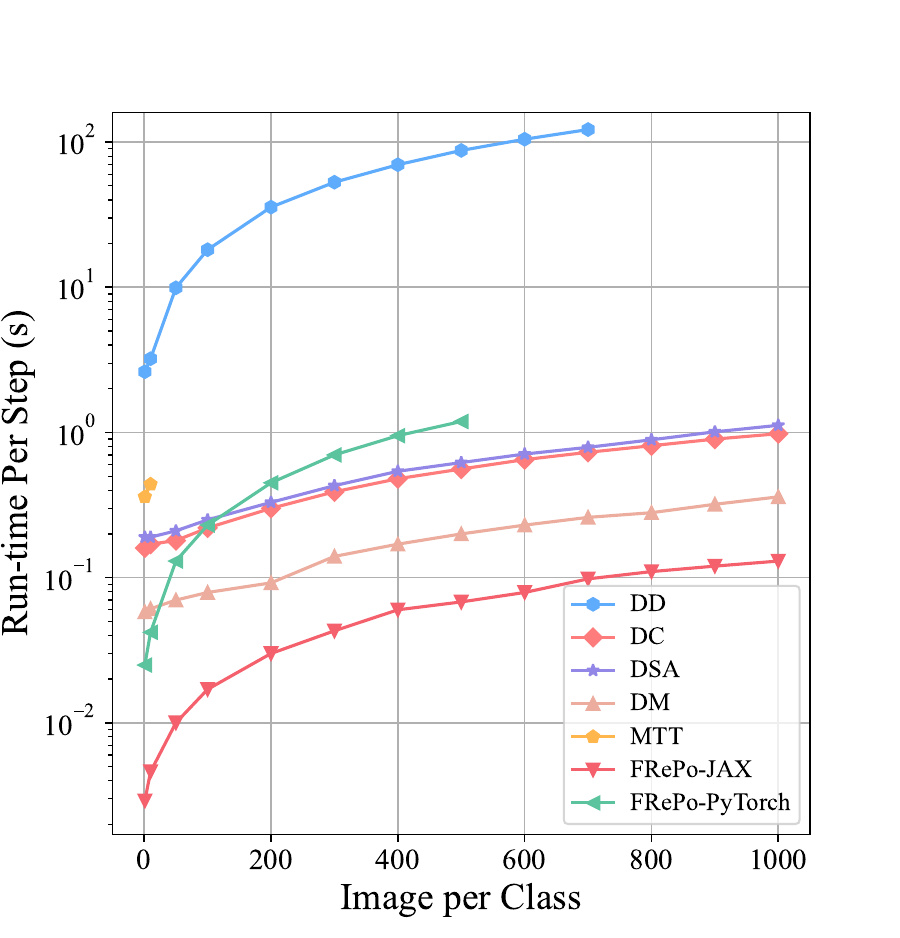} 
  }
  \subfigure[]{
 \includegraphics[width=0.31\linewidth]{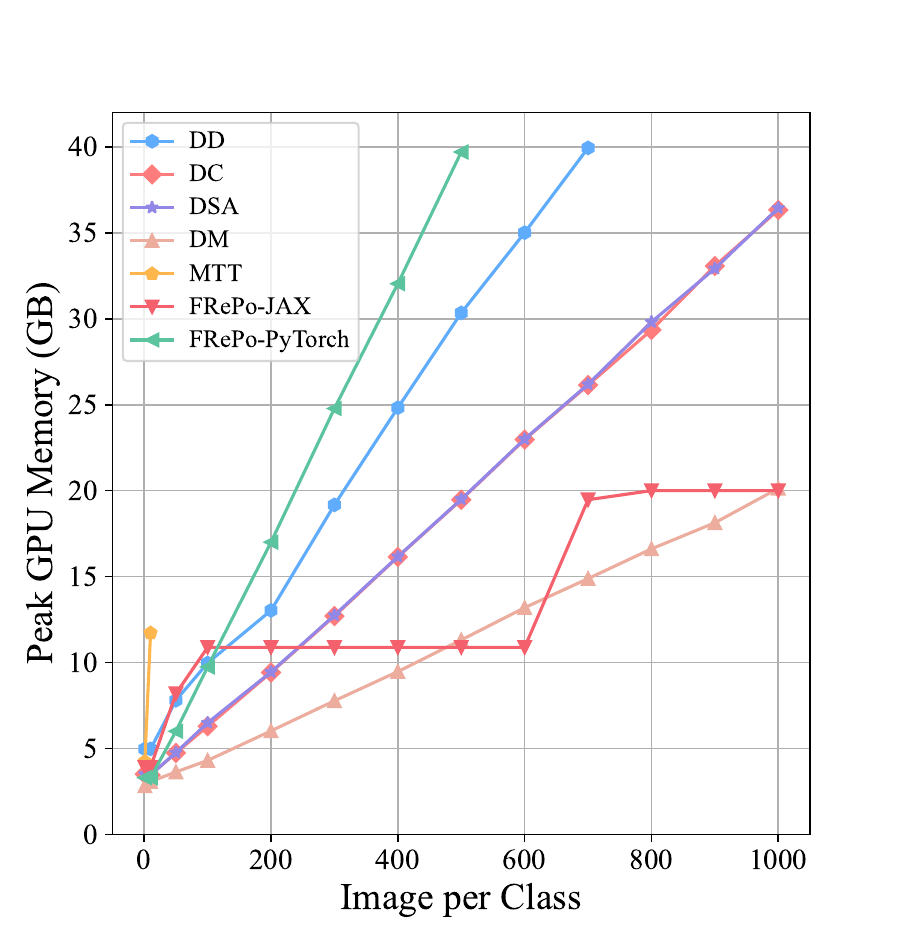}
  }
  \vspace{-0.4cm}
  \caption{(a) Run-time per loop (including time required for updating synthetic data and networks in one loop). (b) Run-time per step (including only time required for updating synthetic data). (c) Peak GPU Memory Usage. All evaluated under full batch training on CIFAR-10.}
  \vspace{-0.4cm}
  \label{fig:runtime}
\end{figure*}

\subsection{Training Cost Evaluation}
We measure run-time and peak GPU memory required by selected methods for training cost evaluation. 
For fair comparisons, in this section, the number of inner loops, if there have any, is based on the experiment settings provided by the author, 
and evaluation is under full batch training.

\textbf{Run-Time Evaluation.} 
We evaluate the required time for one outer loop and the time to calculate the loss function for a batch of synthetic data, respectively. 
As for DD and MTT, the process of network updating is unrolled, and thus the required time for the entire outer loop and the loss function is the same.
Also, as DM does not update the network, the required time for these two processes is the same. 
The evaluation results are shown in Fig.~\ref{fig:runtime}(a)(b). 
The results show that DD requires a significantly longer run time to update the synthetic dataset for a single iteration, as the gradient computation for the performance matching loss over the synthetic dataset involves bi-level optimization. 
DC, DSA, and FRePo implemented by PyTorch have similar required run times. 
Comparing DSA and DC, using DSA augmentation makes the running time slightly longer, but it can significantly improve the performance. 
When the IPC is small, FRePo (both the JAX and PyTorch versions) is significantly faster than other methods, but as the IPC increases, the PyTorch version is similar to the second-efficient echelon methods, while the JAX version is close to DM. 
MTT runs the second slowest, but there is no more data to analyze due to out-of-memory.


\textbf{Peak GPU Memory Usage.} 
We evaluate the peak GPU memory usage during the whole dataset distillation process. 
Here, we record the peak GPU memory in practice and show the results in Fig.~\ref{fig:runtime}(c). 
The results show that DM requires the lowest GPU memory, while MTT requires the most. 
When IPC is only 50, the operation of MTT encounters out-of-memory. 
In addition, with the gradual growth of IPC, JAX version FRePo reveals the advantages of memory efficiency. 
The DD and PyTorch versions of FRePo require relatively large GPU memory. 
The former needs to go through the entire training graph during backpropagation, while the latter adopts a wider network.
Moreover, there is no significant difference in the GPU memory required by DC and DSA, indicating that DSA augmentation is memory-friendly.

\subsection{Empirical Studies}
The following conclusions are drawn from the above experiments:
\begin{itemize}
    \item The performance of DD has significantly improved thanks to the momentum item. However, DD has relatively poor generalizability, and the required run-time and GPU memory are considerable. 
    \item Compared with DC and DSA, adopting DSA can significantly improve the distillation performance and generalizability while not increasing the running time and memory requirement too much. 
    \item DM does not perform as well as other methods. However, DM has relatively good generalizability and significant advantages regarding run time and GPU memory requirements.
    \item MTT has the overall second-best performance but requires much running time and space due to unrolled gradient computation through backpropagation. 
    \item FRePo achieves SOTA performance when IPC are small, regardless of performance or training cost. However, as IPC increases, FRePo does not have comparable performance.
\end{itemize}

\section{Challenges and Possible Improvements}\label{sec:challenges}
The research on dataset distillation has promising prospects, and many algorithms have been applied in various fields. 
Although existing methods have achieved reasonable performance, there are still some challenges and issues. 
In this section, we summarise the key challenges and discuss possible directions for improvements.

\subsection{Computational Cost}\label{sec:computation}
The motivation of DD is to train a small informative dataset so that the model training on it can achieve the same performance as training on the original data set, thereby reducing GPU time.
However, the problem is that generating the synthetic dataset is typically expensive and that the required time will rapidly increase when distilling large-scale datasets. 
This high computational cost is mainly caused by backpropagating through unrolled computational graphs for updating synthetic datasets. 
For example, MTT~\cite{cazenavette2022dataset} does not scale well in terms of both memory and run time issues, where unrolling 30 steps on CIFAR10 with IPC 50 requires 47GB GPU memory, and is prohibitive in large-scale problems like ImageNet-1K~\cite{russakovsky2015imagenet}, as shown in Table~\ref{tab:imagenet1k}.

To tackle this problem, many works have been proposed for different DD frameworks to reduce computational costs. 
One intuitive way is to avoid the inner-loop network training.
As for performance matching, KRR-based methods~\cite{nguyen2020dataset,nguyen2021dataset,zhang2022accelerating,zhou2022dataset,loo2022efficient} are proposed to improve the computational efficiency of DD~\cite{wang2018dataset}, which performs convex optimization and results in a closed-form solution for linear models and avoids extensive inner-loop training. 
As for distribution matching, DM~\cite{DBLP:journals/corr/abs-2110-04181} improves efficiency using random networks as the feature extractors without training, which avoids expensive bi-level optimization. 
However, the high efficiency sacrificed its performance. 

As for MTT, Cui \textit{et al.}~\cite{cui2022scaling} re-organize the computing flow for the gradients, which reduces the memory cost to constant complexity. 
Nevertheless, it still requires pretraining hundreds of teacher models on original datasets, which affects the time and memory efficiency negatively. 
Model augmentation methods like Zhang \textit{et al.}~\cite{zhang2022accelerating} may potentially be helpful to alleviate the problem.  


\subsection{Scaling Up}
The existing dataset distillation methods mainly evaluate the performance of the synthetic dataset up to about 50 IPC. 
In practical applications, it may be necessary to increase the compression ratio to obtain more information on the whole dataset, and the dataset and the network adopted may be larger and more complex. 
Under such circumstances, most existing dataset distillation methods cannot meet the demand. 
In short, the scaling-up problem of dataset distillation is reflected in three aspects: hard to obtain an informative synthetic dataset with a larger IPC; challenging to perform the dataset distillation algorithm on a large-scale dataset such as ImageNet; not easy to adapt to a large, complex model.

\textbf{Larger Compression Ratios.} 
As shown in Fig. \ref{fig:ratio}, on the whole, with the increase of IPC, the accuracy of the dataset distillation methods improves more slowly. 
Moreover, only when it is less than 200 IPC, the methods of dataset distillation are obviously better than selection-based methods. 
With the further increase of IPC, the margin between the dataset distillation methods and selection-based methods narrows considerably, and finally, the selection-based methods are slightly better than the dataset distillation methods. 
The above observation shows that as the synthetic dataset increases, the ability of the dataset to compress knowledge of the real dataset does not increase significantly, even though the memory efficiency of some methods can support larger IPC. 
In other words, many existing dataset distillation methods are not suitable for high compression ratios.

\begin{figure}[t]
\centering
\includegraphics[width=0.8\linewidth]{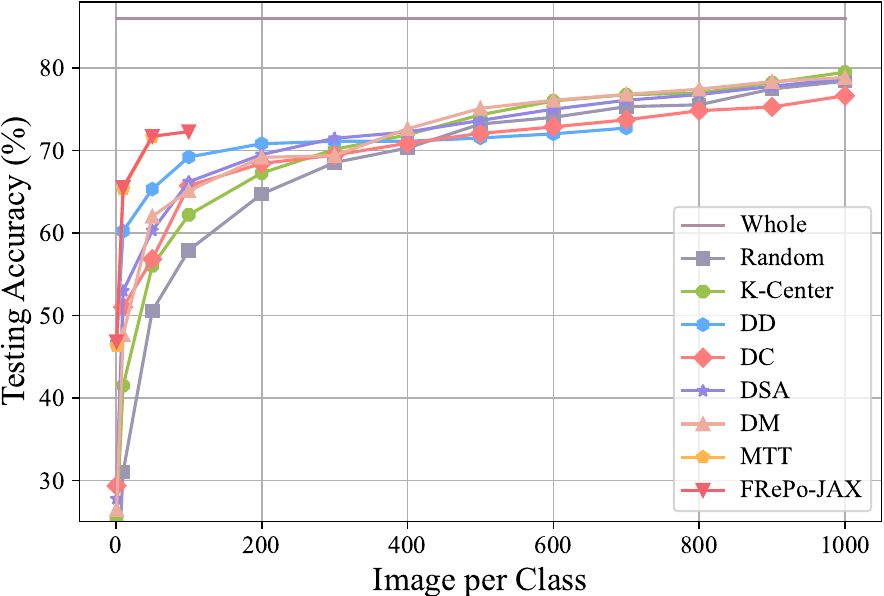}
\vspace{-0.3cm}
\caption{Performance comparison with different compression ratios on CIFAR10 using Random selection, K-Center, DD, DC, DSA, DM, MTT and FRePo-JAX. Results are derived from \cite{cui2022dc} and our experiments.}
\vspace{-0.6cm}
\label{fig:ratio}
\end{figure}

\textbf{Larger Original Datasets.} 
Another scaling-up problem is that many existing dataset distillation methods are challenging to apply to large-scale datasets due to high GPU memory requirements and extensive GPU hours as discussed in Section~\ref{sec:computation}. 
As shown in Table~\ref{tab:imagenet1k}, only a few methods can handle the DD task on ImageNet1K. 
In addition, under the same compression ratio, for the more complex original dataset, the synthetic dataset obtained by the dataset distillation method may have a weaker ability to restore the performance of the original dataset, \textit{i.e.}, the ratio of test accuracy of models trained on $\mathcal{S}$ and $\mathcal{T}$.

\textbf{Larger Networks.}
Many dataset distillation methods include the process of bi-level meta-optimization. 
Larger and more complex network structures require more GPU memory and runtime to compute and store the gradients during the dataset distillation process.
At the same time, as shown in Table \ref{tab:largenetwork}, when the generated dataset is evaluated on complex networks, its performance is significantly lower than that of simple networks. 
As the size of the synthetic dataset is small, complex networks are prone to overfitting problems trained on the small training dataset.

\begin{table}[t]
\renewcommand{\arraystretch}{0.9}
\tiny
\centering
\caption{The performance results of different dataset distillation methods on ImageNet-1K. Results are derived from~\cite{cui2022scaling} and our experiments. $\dag$ adopt momentum~\cite{deng2022remember}.}
\vspace{-0.2cm}
\resizebox{\linewidth}{!}{
\begin{tabular}{cl c  ccc }
\toprule
&& \multicolumn{4}{c}{\bf IPC} \\
\cmidrule{3-6}
&&1 & 2 & 10 &50   \\
\midrule
\multirow{7}{*}{\rotatebox[origin=c]{90}{\bf ImageNet-1K}}
&DD$^\dag$~\cite{wang2018dataset,deng2022remember} & -& - & - & -\\
&DC~\cite{zhao2021dataset} & - & - & - & -  \\
&DSA~\cite{zhao2021datasetb} & - & - & - & -  \\
&DM~\cite{DBLP:journals/corr/abs-2110-04181} & 1.5 $\pm$ 0.1 & 1.7 $\pm$ 0.1 & - & -  \\
&MTT~\cite{cazenavette2022dataset} & - & - & - & -  \\
&FRePo~\cite{zhou2022dataset} & 7.5 $\pm$ 0.3 & 9.7 $\pm$ 0.2 & - & -  \\
&TESLA~\cite{cui2022scaling} & 7.7 $\pm$ 0.2 & 10.5 $\pm$ 0.3 & 17.8 $\pm$ 1.3 & 27.9 $\pm$ 1.2 \\
\bottomrule
\end{tabular}
}
\vspace{-0.2cm}
\label{tab:imagenet1k}
\end{table}

\begin{table}[t]
\renewcommand{\arraystretch}{0.9}
\centering
\large
\caption{The evaluation performance results on CIFAR-10 with 10 Img/Cls for more extensive networks. $\dag$ adopt momentum~\cite{deng2022remember}.}
\vspace{-0.2cm}
\resizebox{\linewidth}{!}{
\begin{tabular}{l c  cccc }
\toprule
& \multicolumn{5}{c}{\bf Evaluation Model} \\
\cmidrule{2-6}
& ResNet18 & ResNet34 & ResNet50 & ResNet101 &ResNet152  \\
\midrule

DD$^\dag$~\cite{wang2018dataset,deng2022remember} & 33.9 $\pm$ 1.1 & 32.2 $\pm$ 2.7 & 19.0 $\pm$ 2.0 & 18.2 $\pm$ 1.4 & 12.3$\pm$ 0.7\\
DC~\cite{zhao2021dataset} & 43.3 $\pm$ 0.6 & 35.4 $\pm$ 1.3 & 23.7 $\pm$ 0.4 & 17.6 $\pm$ 1.0 & 16.3 $\pm$ 1.0 \\
DSA~\cite{zhao2021datasetb} & 42.3 $\pm$ 0.9 & 33.2 $\pm$ 1.0 & 22.9 $\pm$ 0.7 & 18.7 $\pm$ 1.3 & 15.7 $\pm$ 1.0 \\
DM~\cite{DBLP:journals/corr/abs-2110-04181} & 39.5 $\pm$ 0.6 & 31.2 $\pm$ 1.1 & 23.9 $\pm$ 0.6 & 17.7 $\pm$ 1.3 & 16.8 $\pm$ 1.4 \\
MTT~\cite{cazenavette2022dataset} & 45.8 $\pm$ 1.2 & 34.6 $\pm$ 3.3 & 22.5 $\pm$ 0.4 & 18.1 $\pm$ 2.0 & 18.0 $\pm$ 1.1 \\
FRePo~\cite{zhou2022dataset} & 47.4 $\pm$ 0.7 & 41.3 $\pm$ 2.0 & 41.1 $\pm$ 1.8 & 28.7 $\pm$ 1.2 & 22.7 $\pm$ 1.0 \\
\bottomrule
\end{tabular}
}
\vspace{-0.5cm}
\label{tab:largenetwork}
\end{table}

\subsection{Generalization across Different Architectures}
The generalizability of the synthetic dataset across different network architectures is an essential standard to evaluate the performance of a DD method, as it indicates availability in practice. 
However, the evaluation results of the synthetic data generated by the existing DD methods on heterogenous models have not reached homogeneous performance. 
Through the results shown in Table \ref{tab:cross-arch}, if the synthetic dataset and the training architecture are strongly bonded in the optimization process, it will hurt the transferability of the generated synthetic dataset. 
Moreover, the choice of the normalization layer significantly impacts transferability. 

To remedy this problem, one intuitive idea is to unbind the network and the synthetic dataset, such as the practice of DM~\cite{DBLP:journals/corr/abs-2110-04181} and IDC~\cite{kim2022dataset}. 
DM treats the network as the feature extractor without updating, and IDC updates the training network on the real dataset instead of the synthetic dataset. 
From the results, these strategies are somehow effective. 
However, DM has relatively poor performance, and IDC needs extra run time to train networks on real datasets, especially when distilling large-scale datasets like ImageNet. 
Another way is to increase the diversity of training architectures, where dedicated schemes are necessary to coordinate their training and their impact on synthetic data. 

\subsection{Design for Other Tasks and Applications}
The existing DD methods mainly focus on the classification task. Here, we expect future works to apply DD and conduct sophisticated designing in more tasks in computer vision, \textit{e.g.}, semantic segmentation~\cite{long2015fully,chen2017deeplab} and objective detection~\cite{girshick2014rich}, natural language processing, \textit{e.g.}, machine translation~\cite{bahdanau2014neural}, and multi-modal scenarios~\cite{radford2021learning}. 

\subsection{Security and Privacy}
Existing works focus on improving the methods to generate more informative synthetic datasets while neglecting the potential security and privacy of DD. 
A recent work~\cite{liu2023backdoor} provides a new backdoor attack method, DOORPING, which attacks during the dataset distillation process rather than afterward model training as introduced in Section \ref{sec:privacy}. 
Possible defense mechanisms and more security and privacy issues are left for future work.

\section{Conclusion}\label{sec:conclusion}
This paper aims at a comprehensive review of dataset distillation (DD), a popular research topic recently, to synthesize a small dataset given an original large one for the sake of similar performance. 
We present a systematic taxonomy of existing DD methods and categorize them into three major streams by the optimization objective: performance matching, parameter matching, and distribution matching. 
Theoretical analysis reveals their underlying connections. 
For general understanding, we abstract a common algorithmic framework for all current approaches. 
Application of DD on various topics, including continual learning, neural architecture search, privacy, \textit{etc} is also covered. 
Different approaches are experimentally compared in terms of accuracy, time efficiency, and scalability, indicating some critical challenges in this area left for future research. 


%




\section*{Acknowledgment}
We sincerely thank Guang Li \textit{et al.} for the useful information in \href{https://github.com/Guang000/Awesome-Dataset-Distillation}{Awesome-Dataset-Distillation}.


\ifCLASSOPTIONcaptionsoff
  \newpage
\fi



\bibliographystyle{IEEEtran}
\bibliography{IEEEabrv,main}
%



%

\appendices
\section{Taxonomy of Existing DD Methods}
In Table~\ref{tab:summary}, we provide a tablular version of the taxonomy shown in Fig. 5 of the main manuscript, to provide an alternative view with more details. 
A DD method can be analyzed from 4 aspects: optimization objective, fashion of updating networks, synthetic data parameterization, and fashion of learning labels. 

\section{Performance of Existing DD Methods}
Here, we list the distillation performance of all existing DD methods on benchmark datasets with different IPC in Table \ref{tab:mnist} - Table \ref{tab:imagenet-1k}. The benchmark datasets include MNIST~\cite{lecun1998gradient}, Fashion-MNIST~\cite{xiao2017fashion}, SVHN~\cite{netzer2011reading}, CIFAR-10~\cite{krizhevsky2009learning}, CIFAR-100~\cite{krizhevsky2009learning}, Tiny-ImageNet~\cite{le2015tiny} and ImageNet-1K~\cite{russakovsky2015imagenet}. Results are derived from official papers, DC-Benchmark~\cite{cui2022dc} and our experiments.  

\begin{table*}[ht]
\tiny
\renewcommand{\arraystretch}{1.2}
\centering
\caption{Taxonomy of existing dataset distillation methods. 'v' denotes some variant of an optimization objective here.}
\resizebox{\linewidth}{!}{
\begin{tabularx}{13cm}{l| XXX l}
\toprule
 \bf Method & \bf Objective & \bf Network Update & \bf Parameterization & \bf Label \\
\midrule
DD~\cite{wang2018dataset} & Performance Matching-Meta & Use $\mathcal{S}$ (Unroll) & Raw & Fixed\\
AddMem~\cite{deng2022remember} & Performance Matching-Meta & Use $\mathcal{S}$ (Unroll) & Memory-Addressing Matrix & Learnable\\
KIP~\cite{nguyen2020dataset,nguyen2021dataset} & Performance Matching-KRR & Do not Update Network & Raw & Learnable\\
FRePo~\cite{zhou2022dataset} & Performance Matching-KRR & Use $\mathcal{S}$ with a Network Pool & Raw & Learnable \\
RFAD~\cite{loo2022efficient} & Performance Matching-KRR & Do not Update Network & Raw & Learnable\\
\cmidrule{1-5}
DC~\cite{zhao2021dataset} & Single-Step Parameter Matching & Use $\mathcal{S}$ & Raw & Fixed\\
DSA~\cite{zhao2021datasetb} & Single-Step Parameter Matching & Use $\mathcal{S}$ & DSA & Fixed\\
IDC~\cite{kim2022dataset} & Single-Step Parameter Matching & Use $\mathcal{T}$ & Up-sampling \& DSA & Fixed\\
EDD~\cite{zhang2022accelerating} & Single-Step Parameter Matching & Use $\mathcal{T}$ and Model Augmentation & Up-sampling \& DSA & Fixed\\
DCC~\cite{lee2022contrastive} & v - Single-Step Parameter Matching & Use $\mathcal{S}$ & DSA / Raw & Fixed\\
EGM~\cite{jiang2022delving} & v - Single-Step Parameter Matching & Use $\mathcal{S}$ & Raw & Fixed\\
MTT~\cite{cazenavette2022dataset} & Multi-Step Parameter Matching & Use $\mathcal{T}$ (Cached) and $\mathcal{S}$ (Unroll)& DSA & Fixed\\
DDPP~\cite{li2022datasetb} & Multi-Step Parameter Matching & Use $\mathcal{T}$ (Cached),  $\mathcal{S}$ (Unroll) and Parameter Pruning & DSA & Fixed\\
HaBa~\cite{liu2022dataset} & Multi-Step Parameter Matching & Use $\mathcal{T}$ (Cached) and $\mathcal{S}$ (Unroll) & Hallucinator-Basis \& DSA & Fixed\\
FTD~\cite{du2022minimizing} & Multi-Step Parameter Matching & Use $\mathcal{T}$ with Flat Regularization (Cached) and $\mathcal{S}$ (Unroll) & DSA & Fixed\\
TESLA~\cite{cui2022scaling} & v - Multi-Step Parameter Matching & Use $\mathcal{T}$ (Cached) and $\mathcal{S}$ & DSA & Soft\\
\cmidrule{1-5}
DM~\cite{DBLP:journals/corr/abs-2110-04181} & Single-Layer Distribution Matching & Do not Update Network & DSA & Fixed\\
IT-GAN~\cite{zhao2022synthesizing} & Single-Layer Distribution Matching & Do not Update Network & Latent Code-GAN & Fixed\\
KFS~\cite{lee2022dataset} & Single-Layer Distribution Matching & Do not Update Network & Latent Code-Decoder & Fixed\\
CAFE~\cite{wang2022cafe} & Multi-Layer Distribution Matching & Use $\mathcal{S}$ & Raw / DSA & Fixed\\
\bottomrule
\end{tabularx}
}
\label{tab:summary}
\end{table*}

\begin{table}[H]
\renewcommand{\arraystretch}{1.1}
\centering
\caption{The performance results of all existing methods on MNIST. $\dag$ adopt momentum~\cite{deng2022remember}.}
\resizebox{\linewidth}{!}{
\begin{tabular}{cl c  cc }
\toprule
&& \multicolumn{3}{c}{\bf IPC} \\
\cmidrule{3-5}
&&1 & 10 &50   \\
\midrule
\multirow{25}{*}{\rotatebox[origin=c]{90}{\bf MNIST}}
&DD~\cite{wang2018dataset} & - & 79.5 $\pm$ 8.1 & -\\
&DD$^\dag$~\cite{wang2018dataset,deng2022remember} & 95.2 $\pm$ 0.3 & 98.0 $\pm$ 0.1 & 98.8 $\pm$ 0.1\\
&AddMem~\cite{deng2022remember} & 98.7 $\pm$ 0.7 & 99.3 $\pm$ 0.5 & 99.4 $\pm$ 0.4\\
&KIP~(ConvNet)~\cite{nguyen2021dataset} & 90.1 $\pm$ 0.1 & 97.5 $\pm$ 0.0 & 98.3 $\pm$ 0.1\\
&KIP~($\infty$-FC)~\cite{nguyen2020dataset}& 85.5 $\pm$ 0.1 & 97.2 $\pm$ 0.2 & 98.4 $\pm$ 0.1\\
&KIP~($\infty$-Conv)~\cite{nguyen2021dataset}& 97.3 $\pm$ 0.1 & 99.1 $\pm$ 0.1 & 99.5 $\pm$ 0.1 \\
&FRePo~(ConvNet)~\cite{zhou2022dataset} & 93.8 $\pm$ 0.6 & 98.4 $\pm$ 0.1 & 99.2 $\pm$ 0.1 \\
&FRePo~($\infty$-Conv)~\cite{zhou2022dataset}& 93.6 $\pm$ 0.5 & 98.5 $\pm$ 0.1 & 99.1 $\pm$ 0.1 \\
&RFAD~(ConvNet)~\cite{loo2022efficient} & 94.4 $\pm$ 1.5 & 98.5 $\pm$ 0.1 & 98.8 $\pm$ 0.1\\
&RFAD~($\infty$-Conv)~\cite{loo2022efficient} & 97.2 $\pm$ 0.2 & 99.1 $\pm$ 0.0 & 99.1 $\pm$ 0.0\\
\cmidrule{2-5}
&DC~\cite{zhao2021dataset} & 91.7 $\pm$ 0.5 & 97.4 $\pm$ 0.2 & 98.8 $\pm$ 0.2\\
&DSA~\cite{zhao2021datasetb} &88.7 $\pm$ 0.6 & 97.9 $\pm$ 0.1 & 99.2 $\pm$ 0.1 \\
&IDC~\cite{kim2022dataset} & -&-&-\\
&EDD~\cite{zhang2022accelerating} &-&-&- \\
&DCC~\cite{lee2022contrastive} & - &- &-\\
&EGM~\cite{jiang2022delving} &91.9 $\pm$ 0.4 & 97.9 $\pm$ 0.2 & 98.6 $\pm$ 0.1\\
&MTT~\cite{cazenavette2022dataset} & 91.4 $\pm$ 0.9 & 97.3 $\pm$ 0.1 & 98.5 $\pm$ 0.1\\
&DDPP~\cite{li2022datasetb} &-&-&- \\
&HaBa~\cite{liu2022dataset} &-&-&- \\
&FTD~\cite{du2022minimizing} &-&-&- \\
&TESLA~\cite{cui2022scaling} &-&-&- \\
\cmidrule{2-5}
&DM~\cite{DBLP:journals/corr/abs-2110-04181} &89.9 $\pm$ 0.8&97.6 $\pm$ 0.1 &98.6 $\pm$ 0.1 \\
&IT-GAN~\cite{zhao2022synthesizing} & - &- &-\\
&KFS~\cite{lee2022dataset} & - &- &-\\
&CAFE~\cite{wang2022cafe} & 93.1 $\pm$ 0.3 & 97.5 $\pm$ 0.1 & 98.9 $\pm$ 0.2\\
\bottomrule
\end{tabular}
}
\label{tab:mnist}
\end{table}

\begin{table}[H]
\renewcommand{\arraystretch}{1.1}
\centering
\caption{The performance results of all existing methods on Fashion-MNIST. $\dag$ adopt momentum~\cite{deng2022remember}.}
\resizebox{\linewidth}{!}{
\begin{tabular}{cl c  cc }
\toprule
&& \multicolumn{3}{c}{\bf IPC} \\
\cmidrule{3-5}
&&1 & 10 &50   \\
\midrule
\multirow{25}{*}{\rotatebox[origin=c]{90}{\bf Fashion-MNIST}}
&DD~\cite{wang2018dataset} & - & - & -\\
&DD$^\dag$~\cite{wang2018dataset,deng2022remember} & 83.4 $\pm$ 0.3 & 87.6 $\pm$ 0.4 & 87.7 $\pm$ 0.3\\
&AddMem~\cite{deng2022remember} & 88.5 $\pm$ 0.1 & 90.0 $\pm$ 0.7 & 91.2 $\pm$ 0.3\\
&KIP~(ConvNet)~\cite{nguyen2021dataset} & 70.6 $\pm$ 0.6 & 84.6 $\pm$ 0.3 & 88.7 $\pm$ 0.2\\
&KIP~($\infty$-FC)~\cite{nguyen2020dataset}&- & - & -\\
&KIP~($\infty$-Conv)~\cite{nguyen2021dataset}& 82.9 $\pm$ 0.2 & 91.0 $\pm$ 0.1 & 92.4 $\pm$ 0.1 \\
&FRePo~(ConvNet)~\cite{zhou2022dataset} & 75.6 $\pm$ 0.5 & 86.2 $\pm$ 0.3 & 89.6 $\pm$ 0.1 \\
&FRePo~($\infty$-Conv)~\cite{zhou2022dataset}& 76.3 $\pm$ 0.4 & 86.7 $\pm$ 0.3 & 90.0$\pm$0.0 \\
&RFAD~(ConvNet)~\cite{loo2022efficient} & 78.6 $\pm$ 1.3 & 87.0 $\pm$ 0.5 & 88.8 $\pm$ 0.4\\
&RFAD~($\infty$-Conv)~\cite{loo2022efficient} & 84.6 $\pm$ 0.2 & 90.3 $\pm$ 0.2 & 91.4 $\pm$ 0.1\\
\cmidrule{2-5}
&DC~\cite{zhao2021dataset} & 70.5 $\pm$ 0.6 & 82.3 $\pm$ 0.4 & 83.6 $\pm$ 0.4\\
&DSA~\cite{zhao2021datasetb} &70.6 $\pm$ 0.6 & 84.8 $\pm$ 0.3 & 88.8 $\pm$ 0.2 \\
&IDC~\cite{kim2022dataset} & -&-&-\\
&EDD~\cite{zhang2022accelerating} &-&-&- \\
&DCC~\cite{lee2022contrastive} & - &- &-\\
&EGM~\cite{jiang2022delving} &71.4 $\pm$ 0.4 & 85.4 $\pm$ 0.3 & 87.9 $\pm$ 0.2\\
&MTT~\cite{cazenavette2022dataset} & 75.1 $\pm$ 0.9 &87.2 $\pm$ 0.3 & 88.3 $\pm$ 0.1 \\
&DDPP~\cite{li2022datasetb} &-&-&- \\
&HaBa~\cite{liu2022dataset} &-&-&- \\
&FTD~\cite{du2022minimizing} &-&-&- \\
&TESLA~\cite{cui2022scaling} &-&-&- \\
\cmidrule{2-5}
&DM~\cite{DBLP:journals/corr/abs-2110-04181} &71.5 $\pm$ 0.5&83.6 $\pm$ 0.2 &88.2 $\pm$ 0.1 \\
&IT-GAN~\cite{zhao2022synthesizing} & - &- &-\\
&KFS~\cite{lee2022dataset} & - &- &-\\
&CAFE~\cite{wang2022cafe} & 73.7 $\pm$ 0.7 & 83.0 $\pm$ 0.4 & 88.2 $\pm$ 0.3\\
\bottomrule
\end{tabular}
}
\label{tab:FashionMNIST}
\end{table}

\clearpage

\begin{table}[H]
\renewcommand{\arraystretch}{1.1}
\centering
\caption{The performance results of all existing methods on SVHN. $\dag$ adopt momentum~\cite{deng2022remember}.}
\resizebox{\linewidth}{!}{
\begin{tabular}{cl c  cc }
\toprule
&& \multicolumn{3}{c}{\bf IPC} \\
\cmidrule{3-5}
&&1 & 10 &50   \\
\midrule
\multirow{25}{*}{\rotatebox[origin=c]{90}{\bf SVHN}}
&DD~\cite{wang2018dataset} & - & - & -\\
&DD$^\dag$~\cite{wang2018dataset,deng2022remember} & - & - & -\\
&AddMem~\cite{deng2022remember} & 87.3 $\pm$ 0.1 & 89.1 $\pm$ 0.2 & 89.5 $\pm$ 0.2 \\
&KIP~(ConvNet)~\cite{nguyen2021dataset} & 57.3 $\pm$ 0.1 & 75.0 $\pm$ 0.1 & 80.5 $\pm$ 0.1 \\
&KIP~($\infty$-FC)~\cite{nguyen2020dataset}& - & - & -\\
&KIP~($\infty$-Conv)~\cite{nguyen2021dataset}& 64.3 $\pm$ 0.4 & 81.1 $\pm$ 0.5 & 84.3 $\pm$ 0.1 \\
&FRePo~(ConvNet)~\cite{zhou2022dataset} & - & -& - \\
&FRePo~($\infty$-Conv)~\cite{zhou2022dataset}& - & - & - \\
&RFAD~(ConvNet)~\cite{loo2022efficient} & 52.2 $\pm$ 2.2 & 74.9 $\pm$ 0.4 & 80.9 $\pm$ 0.3 \\
&RFAD~($\infty$-Conv)~\cite{loo2022efficient} & 57.4 $\pm$ 0.8 & 78.2 $\pm$ 0.5 & 82.4 $\pm$ 0.1 \\
\cmidrule{2-5}
&DC~\cite{zhao2021dataset} & 31.2 $\pm$ 1.4 & 76.1 $\pm$ 0.6 & 82.3 $\pm$ 0.3 \\
&DSA~\cite{zhao2021datasetb} &27.5 $\pm$ 1.4 & 79.2 $\pm$ 0.5 & 84.4 $\pm$ 0.4 \\
&IDC~\cite{kim2022dataset} & -&-&-\\
&EDD~\cite{zhang2022accelerating} &-&-&- \\
&DCC~\cite{lee2022contrastive} & 47.5 $\pm$ 2.6 & 80.5 $\pm$ 0.6 & 87.2 $\pm$ 0.3 \\
&EGM~\cite{jiang2022delving} & 34.5 $\pm$ 1.9 & 76.2 $\pm$ 0.7 & 83.8 $\pm$ 0.3 \\
&MTT~\cite{cazenavette2022dataset} & - & - & -\\
&DDPP~\cite{li2022datasetb} &-&-&- \\
&HaBa~\cite{liu2022dataset} & 69.8 $\pm$ 1.3 & 83.2 $\pm$ 0.4 & 88.3 $\pm$ 0.1 \\
&FTD~\cite{du2022minimizing} &-&-&- \\
&TESLA~\cite{cui2022scaling} &-&-&- \\
\cmidrule{2-5}
&DM~\cite{DBLP:journals/corr/abs-2110-04181} & - & - & -\\
&IT-GAN~\cite{zhao2022synthesizing} & - &- &-\\
&KFS~\cite{lee2022dataset} & - &- &-\\
&CAFE~\cite{wang2022cafe} & 42.9 $\pm$ 3.0 & 77.9 $\pm$ 0.6 & 82.3 $\pm$ 0.4 \\
\bottomrule
\end{tabular}
}
\label{tab:svhn}
\end{table}

\begin{table}[H]
\renewcommand{\arraystretch}{1.1}
\centering
\caption{The performance results of all existing methods on CIFAR-10. $\dag$ adopt momentum~\cite{deng2022remember}.}
\resizebox{\linewidth}{!}{
\begin{tabular}{cl c  cc }
\toprule
&& \multicolumn{3}{c}{\bf IPC} \\
\cmidrule{3-5}
&&1 & 10 &50   \\
\midrule
\multirow{25}{*}{\rotatebox[origin=c]{90}{\bf CIFAR-10}}
&DD~\cite{wang2018dataset} & - & 36.8 $\pm$ 1.2 & -\\
&DD$^\dag$~\cite{wang2018dataset,deng2022remember} & 46.6 $\pm$ 0.6 & 60.2 $\pm$ 0.4 &65.3 $\pm$ 0.4\\
&AddMem~\cite{deng2022remember} & 66.4 $\pm$ 0.4 &71.2 $\pm$ 0.4 & 73.6 $\pm$ 0.5\\
&KIP~(ConvNet)~\cite{nguyen2021dataset} & 49.9 $\pm$ 0.2 & 62.7 $\pm$ 0.3 & 68.6 $\pm$ 0.2\\
&KIP~($\infty$-FC)~\cite{nguyen2020dataset}&40.5 $\pm$ 0.4 & 53.1 $\pm$ 0.5 & 58.6 $\pm$ 0.4\\
&KIP~($\infty$-Conv)~\cite{nguyen2021dataset}& 64.3 $\pm$ 0.4 & 81.1 $\pm$ 0.5 & 84.3 $\pm$ 0.1 \\
&FRePo~(ConvNet)~\cite{zhou2022dataset} & 46.8 $\pm$ 0.7 & 65.5 $\pm$ 0.6 & 71.7 $\pm$ 0.2 \\
&FRePo~($\infty$-Conv)~\cite{zhou2022dataset}& 47.9 $\pm$ 0.6 & 68.0 $\pm$ 0.2 & 74.4 $\pm$ 0.1 \\
&RFAD~(ConvNet)~\cite{loo2022efficient} & 53.6 $\pm$ 1.2& 66.3 $\pm$ 0.5 & 71.1 $\pm$ 0.4\\
&RFAD~($\infty$-Conv)~\cite{loo2022efficient} & 61.4 $\pm$ 0.8 & 73.7 $\pm$ 0.2 & 76.6 $\pm$ 0.3\\
\cmidrule{2-5}
&DC~\cite{zhao2021dataset} & 28.3 $\pm$ 0.5 & 44.9 $\pm$ 0.5 & 53.9 $\pm$ 0.5\\
&DSA~\cite{zhao2021datasetb} &28.8 $\pm$ 0.7 & 53.2 $\pm$ 0.8 & 60.6 $\pm$ 0.5\\
&IDC~\cite{kim2022dataset} & 50.0 $\pm$ 0.4 & 67.5 $\pm$ 0.5 & 74.5 $\pm$ 0.1\\
&EDD~\cite{zhang2022accelerating} &49.2 $\pm$ 0.4 & 67.1 $\pm$ 0.2 & 73.8 $\pm$ 0.1 \\
&DCC~\cite{lee2022contrastive} & 34.0 $\pm$ 0.7 & 54.5 $\pm$ 0.5 & 64.2 $\pm$ 0.4\\
&EGM~\cite{jiang2022delving} &30.0 $\pm$ 0.6 & 50.2 $\pm$ 0.6 & 60.0 $\pm$ 0.4\\
&MTT~\cite{cazenavette2022dataset} & 46.3 $\pm$ 0.8 & 65.3 $\pm$ 0.7 & 71.6 $\pm$ 0.2 \\
&DDPP~\cite{li2022datasetb} &46.4 $\pm$ 0.6 & 65.5 $\pm$ 0.3 & 71.9 $\pm$ 0.2 \\
&HaBa~\cite{liu2022dataset} &48.3 $\pm$ 0.8 & 69.9 $\pm$ 0.4 & 74.0 $\pm$ 0.2\\
&FTD~\cite{du2022minimizing} &46.8 $\pm$ 0.3 & 66.6 $\pm$ 0.3 & 73.8 $\pm$ 0.2\\
&TESLA~\cite{cui2022scaling} &48.5 $\pm$ 0.8 & 66.4 $\pm$ 0.8 & 72.6 $\pm$ 0.7\\
\cmidrule{2-5}
&DM~\cite{DBLP:journals/corr/abs-2110-04181} &26.5 $\pm$ 0.4 & 48.9 $\pm$ 0.6 & 63.0 $\pm$ 0.4 \\
&IT-GAN~\cite{zhao2022synthesizing} & - &- &-\\
&KFS~\cite{lee2022dataset} & 59.8 $\pm$ 0.5 & 72.0 $\pm$ 0.3 & 75.0 $\pm$ 0.2 \\
&CAFE~\cite{wang2022cafe} & 31.6 $\pm$ 0.8 & 50.9 $\pm$ 0.5 & 62.3 $\pm$ 0.4\\
\bottomrule
\end{tabular}
}
\label{tab:cifar10}
\end{table}

\begin{table}[H]
\renewcommand{\arraystretch}{1.1}
\centering
\caption{The performance results of all existing methods on CIFAR-100. $\dag$ adopt momentum~\cite{deng2022remember}.}
\resizebox{\linewidth}{!}{
\begin{tabular}{cl c  cc }
\toprule
&& \multicolumn{3}{c}{\bf IPC} \\
\cmidrule{3-5}
&&1 & 10 &50   \\
\midrule
\multirow{25}{*}{\rotatebox[origin=c]{90}{\bf CIFAR-100}}
&DD~\cite{wang2018dataset} & - &- & -\\
&DD$^\dag$~\cite{wang2018dataset,deng2022remember} & 19.6 $\pm$ 0.4 & 32.7 $\pm$ 0.4 & 35.0 $\pm$ 0.3\\
&AddMem~\cite{deng2022remember} & 34.0 $\pm$ 0.4 & 42.9 $\pm$ 0.7 & -\\
&KIP~(ConvNet)~\cite{nguyen2021dataset} & 15.7 $\pm$ 0.2 & 28.3 $\pm$ 0.1 & -\\
&KIP~($\infty$-FC)~\cite{nguyen2020dataset}& - & - & - \\
&KIP~($\infty$-Conv)~\cite{nguyen2021dataset}& 34.9 $\pm$ 0.1 & 49.5 $\pm$ 0.3 & - \\
&FRePo~(ConvNet)~\cite{zhou2022dataset} & 27.2 $\pm$ 0.4 & 41.3 $\pm$ 0.2 & 44.3 $\pm$ 0.2\\
&FRePo~($\infty$-Conv)~\cite{zhou2022dataset}& 30.4 $\pm$ 0.3 & 42.1 $\pm$ 0.2 & 43.0 $\pm$ 0.3\\
&RFAD~(ConvNet)~\cite{loo2022efficient} & 26.3 $\pm$ 1.1 & 33.0 $\pm$ 0.3 & -\\
&RFAD~($\infty$-Conv)~\cite{loo2022efficient} & 44.1 $\pm$ 0.1 & 46.8 $\pm$ 0.2 & -\\
\cmidrule{2-5}
&DC~\cite{zhao2021dataset} & 12.6 $\pm$ 0.4 & 25.4 $\pm$ 0.3 & 29.7 $\pm$ 0.3\\
&DSA~\cite{zhao2021datasetb} & 13.9 $\pm$ 0.3 & 32.3 $\pm$ 0.3 & 42.8 $\pm$ 0.4 \\
&IDC~\cite{kim2022dataset} & - & 44.8 $\pm$ 0.2 & -\\
&EDD~\cite{zhang2022accelerating} & 29.8 $\pm$ 0.2 & 46.2 $\pm$ 0.3 & 52.6 $\pm$ 0.4 \\
&DCC~\cite{lee2022contrastive} & 14.6 $\pm$ 0.3 & 33.5 $\pm$ 0.3 & 40.0 $\pm$ 0.3\\
&EGM~\cite{jiang2022delving} &12.7 $\pm$ 0.4 & 31.1 $\pm$ 0.3 & - \\
&MTT~\cite{cazenavette2022dataset} & 24.3 $\pm$ 0.3 & 40.1 $\pm$ 0.4 & 47.7 $\pm$ 0.2 \\
&DDPP~\cite{li2022datasetb} & 24.6 $\pm$ 0.1 & 43.1 $\pm$ 0.3 & 48.4 $\pm$ 0.3 \\
&HaBa~\cite{liu2022dataset} & 33.4 $\pm$ 0.4 & 40.2 $\pm$ 0.2 & 47.0 $\pm$ 0.2 \\
&FTD~\cite{du2022minimizing} &25.2 $\pm$ 0.2 & 43.4 $\pm$ 0.3 & 50.7 $\pm$ 0.3 \\
&TESLA~\cite{cui2022scaling} &24.8 $\pm$ 0.4 & 41.7 $\pm$ 0.3 & 47.9 $\pm$ 0.3 \\
\cmidrule{2-5}
&DM~\cite{DBLP:journals/corr/abs-2110-04181} & 11.4 $\pm$ 0.3 & 29.7 $\pm$ 0.3 & 43.6 $\pm$ 0.4 \\
&IT-GAN~\cite{zhao2022synthesizing} & - &- &-\\
&KFS~\cite{lee2022dataset} & 40.0 $\pm$ 0.5 & 50.6 $\pm$ 0.2 & - \\
&CAFE~\cite{wang2022cafe} & 14.0 $\pm$ 0.3 & 31.5 $\pm$ 0.2 & 42.9 $\pm$ 0.2\\
\bottomrule
\end{tabular}
}
\label{tab:cifar100}
\end{table}

\begin{table}[H]
\renewcommand{\arraystretch}{1.1}
\centering
\caption{The performance results of all existing methods on Tiny-ImageNet. $\dag$ adopt momentum~\cite{deng2022remember}.}
\resizebox{\linewidth}{!}{
\begin{tabular}{cl c  cc }
\toprule
&& \multicolumn{3}{c}{\bf IPC} \\
\cmidrule{3-5}
&&1 & 10 &50   \\
\midrule
\multirow{25}{*}{\rotatebox[origin=c]{90}{\bf Tiny-ImageNet}}
&DD~\cite{wang2018dataset} & - &- & -\\
&DD$^\dag$~\cite{wang2018dataset,deng2022remember} & - &-&-\\
&AddMem~\cite{deng2022remember} & -&-&-\\
&KIP~(ConvNet)~\cite{nguyen2021dataset} & -&- & -\\
&KIP~($\infty$-FC)~\cite{nguyen2020dataset}& - & - & - \\
&KIP~($\infty$-Conv)~\cite{nguyen2021dataset}& -&- & - \\
&FRePo~(ConvNet)~\cite{zhou2022dataset} & 15.4 $\pm$ 0.3 & 24.9 $\pm$ 0.2 & - \\
&FRePo~($\infty$-Conv)~\cite{zhou2022dataset}& 17.6 $\pm$ 0.2 & 25.3 $\pm$ 0.2 & - \\
&RFAD~(ConvNet)~\cite{loo2022efficient} & - & - & -\\
&RFAD~($\infty$-Conv)~\cite{loo2022efficient} & -&-& -\\
\cmidrule{2-5}
&DC~\cite{zhao2021dataset} & 5.3 $\pm$ 0.2 & 11.1 $\pm$ 0.3 & 11.2 $\pm$ 0.3 \\
&DSA~\cite{zhao2021datasetb} & 6.6 $\pm$ 0.2 & 16.3 $\pm$ 0.2 & 25.3 $\pm$ 0.2 \\
&IDC~\cite{kim2022dataset} &- & -& -\\
&EDD~\cite{zhang2022accelerating} & - & - & - \\
&DCC~\cite{lee2022contrastive} & - & - & -\\
&EGM~\cite{jiang2022delving} & - & - & - \\
&MTT~\cite{cazenavette2022dataset} & 8.8 $\pm$ 0.3 & 23.2 $\pm$ 0.2 & 28.2 $\pm$ 0.5 \\
&DDPP~\cite{li2022datasetb} &- & -& -\\
&HaBa~\cite{liu2022dataset} & -&-&- \\
&FTD~\cite{du2022minimizing} & 10.4 $\pm$ 0.3 & 24.5 $\pm$ 0.2 & - \\
&TESLA~\cite{cui2022scaling} & - & - & - \\
\cmidrule{2-5}
&DM~\cite{DBLP:journals/corr/abs-2110-04181} & 3.9 $\pm$ 0.2 & 13.5 $\pm$ 0.3 & 24.1 $\pm$ 0.3 \\
&IT-GAN~\cite{zhao2022synthesizing} & - &- &-\\
&KFS~\cite{lee2022dataset} & 22.7 $\pm$ 0.2 & 27.8 $\pm$ 0.2 & - \\
&CAFE~\cite{wang2022cafe} & - & - & -\\
\bottomrule
\end{tabular}
}
\label{tab:tinyimagenet}
\end{table}

\begin{table}[H]
\renewcommand{\arraystretch}{1.2}
\large
\centering
\caption{The performance results of all existing methods on ImageNet-1K. $\dag$ adopt momentum~\cite{deng2022remember}.}
\resizebox{\linewidth}{!}{
\begin{tabular}{cl c c cc }
\toprule
&& \multicolumn{4}{c}{\bf IPC} \\
\cmidrule{3-6}
&&1 &2 &10 &50   \\
\midrule
\multirow{25}{*}{\rotatebox[origin=c]{90}{\bf ImageNet-1K}}
&DD~\cite{wang2018dataset} & - &- & -&-\\
&DD$^\dag$~\cite{wang2018dataset,deng2022remember} & - &-&-&-\\
&AddMem~\cite{deng2022remember} & -&-&-&-\\
&KIP~(ConvNet)~\cite{nguyen2021dataset} & -&- & -&-\\
&KIP~($\infty$-FC)~\cite{nguyen2020dataset}& - & - & - &-\\
&KIP~($\infty$-Conv)~\cite{nguyen2021dataset}& -&- & - &-\\
&FRePo~(ConvNet)~\cite{zhou2022dataset} & 7.5 $\pm$ 0.3 & 9.7 $\pm$ 0.2 & -&- \\
&FRePo~($\infty$-Conv)~\cite{zhou2022dataset}& - & -& -&- \\
&RFAD~(ConvNet)~\cite{loo2022efficient} & - & - & -&-\\
&RFAD~($\infty$-Conv)~\cite{loo2022efficient} & -&-& -&-\\
\cmidrule{2-6}
&DC~\cite{zhao2021dataset} & - & -& - &-\\
&DSA~\cite{zhao2021datasetb} &-&-&-&- \\
&IDC~\cite{kim2022dataset} &- & -& -&-\\
&EDD~\cite{zhang2022accelerating} & - & - & - &-\\
&DCC~\cite{lee2022contrastive} & - & - & -&-\\
&EGM~\cite{jiang2022delving} & - & - & - &-\\
&MTT~\cite{cazenavette2022dataset} & -&-&-&-\\
&DDPP~\cite{li2022datasetb} &- & -& -&-\\
&HaBa~\cite{liu2022dataset} & -&-&- &-\\
&FTD~\cite{du2022minimizing} & - & - & -&- \\
&TESLA~\cite{cui2022scaling} & 7.7 $\pm$ 0.2 & 10.5 $\pm$ 0.3 & 17.8 $\pm$ 1.3 & 27.9 $\pm$ 1.2 \\
\cmidrule{2-6}
&DM~\cite{DBLP:journals/corr/abs-2110-04181} & 1.5 $\pm$ 0.1 & 1.7 $\pm$ 0.1  & - & -\\
&IT-GAN~\cite{zhao2022synthesizing} & - &- &- &-\\
&KFS~\cite{lee2022dataset} & -&-&- & - \\
&CAFE~\cite{wang2022cafe} & - & - & -&-\\
\bottomrule
\end{tabular}
}
\label{tab:imagenet-1k}
\end{table}


%





\ifCLASSOPTIONcaptionsoff
  \newpage
\fi

\end{document}